\def\eqref#1{equation~\ref{#1}}
\def\1{\bm{1}}
\def\rvu{{\mathbf{i}}}
\def\rvu{{\mathbf{u}}}
\def\rvx{{\mathbf{x}}}
\def\rvz{{\mathbf{z}}}
\def\mJ{{\bm{J}}}
\DeclareMathAlphabet{\mathsfit}{\encodingdefault}{\sfdefault}{m}{sl}
\SetMathAlphabet{\mathsfit}{bold}{\encodingdefault}{\sfdefault}{bx}{n}
\newtheorem{theorem}{Theorem}
\newtheorem{corollary}{Corollary}
\newtheorem{definition}{Definition}
\definecolor{myblue}{HTML}{b2f0ff}
\definecolor{myblue2}{HTML}{cef5ff}
\definecolor{myblue3}{HTML}{e7faff}
\newcommand\DoToC{%
  \startcontents
  \printcontents{}{1}{\hrulefill\vskip0pt}
  \vskip0pt \noindent\hrulefill
  }
\title{Synergy Between Sufficient Changes and Sparse Mixing Procedure for Disentangled Representation Learning}
\author{Zijian Li$^{\dagger \bullet *}$\, Shunxing Fan$^{\bullet}$\thanks{These authors contributed equally to this work.}  \, Yujia Zheng$^{\dagger}$\, Ignavier Ng$^{\dagger}$\, 
Shaoan Xie$^{\dagger}$\, Guangyi Chen$^{\dagger \bullet}$\, \\ \textbf{Xinshuai Dong}$^{\dagger}$\, \textbf{Ruichu Cai}$^{\ddagger}$\,
\textbf{Kun Zhang}$^{\dagger \bullet}$ \\
$^{\dagger}$Carnegie Mellon University, Pittsburgh PA, USA \\
$^{\bullet}$Mohamed bin Zayed University of Artificial Intelligence, Abu Dhabi, UAE \\
$^{\ddagger}$Guangdong University of Technology, Guangzhou, China \\}
\begin{document}

\maketitle

\begin{abstract}
% Disentangled representation learning aims to recover latent sources from their observable nonlinear mixtures, yet achieving identifiability under mild assumptions remains a challenging problem. Some methods use auxiliary variables like domain indexes to ensure identifiability, but obtaining an adequate number of domains is often impractical. Alternatively, other approaches harness the structural sparsity of the mixing process, but the sparsity constraint may not hold universally for all sources. To overcome these limitations, we establish a general identifiability result from the perspective of sufficient changes. Specifically, we first propose a generalized structural sparsity principle to achieve subspace identifiability by leveraging the sufficient changes induced by observations. Additionally, we employ limited domain information to achieve component-wise identifiability by leveraging sufficient changes from multiple distributions. Our method reduces the reliance on stringent assumptions associated with auxiliary-variable-based and structural-sparsity-based approaches, making it more applicable to real-world scenarios. Experiment results on synthetic and real-world datasets support our theoretical results.

% 解耦表征很重要，他的其中一个理论基础是nonlinear ICA，但是它的可识别性是一个挑战

Disentangled representation learning aims to uncover latent variables underlying the observed data, and generally speaking, rather strong assumptions are needed to ensure identifiability. Some approaches rely on sufficient changes on the distribution of latent variables indicated by auxiliary variables such as domain indices, but acquiring enough domains is often challenging. Alternative approaches exploit structural sparsity assumptions on the mixing procedure, but such constraints are usually (partially) violated in practice. 
% Interestingly, we find that the complementary nature between sufficient change and sparse mixing process can mitigate the reliance on the aforementioned assumptions. 
Interestingly, we find that these two seemingly unrelated assumptions can actually complement each other to achieve identifiability. 
Specifically, when conditioned on auxiliary variables, the sparse mixing procedure assumption provides structural constraints on the mapping from estimated to true latent variables and hence compensates for potentially insufficient distribution changes. Building on this insight, we propose an identifiability theory with less restrictive constraints regarding distribution changes and the sparse mixing procedure, enhancing applicability to real-world scenarios. Additionally, we develop an estimation framework incorporating a domain encoding network and a sparse mixing constraint and provide two implementations, based on variational autoencoders and generative adversarial networks, respectively. Experiment results on synthetic and real-world datasets support our theoretical results. The code is available at \href{https://github.com/jozerozero/Synergy_Disentanglement}{https://github.com/jozerozero/Synergy\_Disentanglement}.

\end{abstract}

% Linear 的ICA已经很成熟了，nonlinear很难，现有的工作往往需要不同的

% Nonlinear ICA旨在从混合观测中恢复隐变量，现有的工作需要辅助变量，但是辅助变量的数量难以在实际场景中满足，另一方面是结构稀疏，但是但是需要要求每一个变量都是满足稀疏性。
% 本文中，我们 unify the identifiability of Nonlinear ICA from the perspective of sufficient changes and xxxx. As a results, we find that the existing 基于辅助变量的方法和结构稀疏的方法是我们的special cases，We provide  

\section{Introduction}

Disentangled representation learning \citep{scholkopf2021toward} is a learning paradigm that seeks to learn meaningful and explanatory representation from observed data. Mathematically, suppose that observed variables $\rvx$ are generated from latent variables $\rvz$ through an unknown mixing function $f$, i.e., $\rvx=f(\rvz)$. The primary goal of disentangled representation learning is to recover the latent variables with certain identifiability guarantees, i.e., to estimate them up to particular indeterminacies such as component-wise transformations. A theoretical foundation in this field is related to independent component analysis (ICA) \citep{hyvarinen2001independent,hyvarinen2023nonlinear}. Earlier methods \citep{hyvarinen2000independent,tichavsky2006performance} achieved identifiability with the linear mixing assumption. To address more complex real-world tasks, nonlinear ICA \citep{hyvarinen2016unsupervised} allows the mixing function $f$ to be an unknown nonlinear function. However, the identifiability of nonlinear ICA remains a significant challenge, primarily because without additional assumptions, 
there can be infinitely many solutions with independent variables which are mixtures of the true latent variables \citep{hyvarinen1999nonlinear}.

To overcome these challenges, several assumptions have been proposed, such as distribution sufficient changes (see, e.g., \cite{hyvarinen2016unsupervised}) and mixing procedure restrictions (see, e.g., \cite{zheng2023generalizing}). For example, the methods based on sufficient changes exploit the auxiliary valuables $\rvu$ and assume that the latent variables have changing distributions across different values of $\rvu$ but are conditionally independent given $\rvu$ \citep{hyvarinen2016unsupervised, hyvarinen2019nonlinear,khemakhem2020variational}. Under this assumption, many works use domain indices or the observed class labels as $\rvu$\citep{pmlr-v162-kong22a,li2024subspace}. Additionally, other works harness the clustering-based methods \citep{willetts2021don} or temporal dependency \citep{yao2021learning,yao2022temporally,chen2024caring,li2024identification} for the identifiability of nonlinear ICA. Alternatively, one can impose suitable restrictions on the mixing function \citep{gresele2021independent,moran2021identifiable,locatello2019disentangling} to achieve identifiability. Specifically, \cite{hyvarinen1999nonlinear,buchholz2022function} show that conformal maps are identifiable up to specific indeterminacies. %%Moreover, \citep{taleb1999source} achieves identifiability by assuming that the mixing process is a component-wise nonlinear function added to a linear mixture. 
Recently, \cite{morioka2023causal} achieved identifiability by assuming that the observational mixing exhibits a suitable grouping of the observations. \cite{zheng2022identifiability,zheng2023generalizing} proposed the structural sparsity assumption, making it a fully unsupervised manner for the identifiability of nonlinear ICA. Specifically, this assumption restricts the connective structure from sources to observations, i.e., the support of the Jacobian matrix of the mixing function. One may refer to Appendix \ref{app:related_works} for further discussion of related works, including the disentangled representation learning, identifiability of nonlinear ICA, and multi-domain image generation.

Despite advances in the identifiability of disentangled representation, these methods impose stringent conditions on the number of auxiliary variables or the mixing function, each of which might be violated in practice. On the one hand, the methods based on auxiliary variables \cite{khemakhem2020variational,pmlr-v162-kong22a} often require a large number of values of the auxiliary variables $\rvu$. Specifically, in domain adaptation, achieving theoretical guarantees of component-wise identifiability of the latent variables typically requires at least $2n + 1$ domains, where the dimension of latent variables is $n$. On the other hand, to achieve full identifiability, methods with a constrained mixing procedure have to enforce strong constraints on the mixing function. For example, the structural sparsity assumptions \citep{zheng2022identifiability,zheng2023generalizing}say that for each latent source $z_i$, there exists a set of observations such that the intersection of
their parents is $z_i$, which excludes scenarios where any latent variable is densely connected to observed variables. Although each of the two types of constraints discussed above is generally strong, both of them may be partly true in practice, given a particular problem. Therefore, it is natural to ask this question:
%\begin{center}
	{ {\textit{Is it possible to leverage both types of constraints in a complementary, principled way to learn disentangled representations with identifiability guarantees? }} }
%\end{center}

The answer is \textbf{yes}--a unified framework is proposed in this paper to address it. Intuitively, the sparse mixing procedure assumption implies independence between certain latent variables and a particular subset of the observed variables. For example, $z_i \perp\!\!\!\!\perp x_j | \rvu$, if $z_i$ is not adjacent to $x_j$ (i.e., $z_i$ does not contribute to $x_j$).  This condition can then be used to constrain the mapping from the estimated latent variables $\hat{\rvz}$ to the true latent variables $\rvz$. In such cases, the sparsity in the mixing procedure compensates for a potential lack of sufficient domains.  Additionally, even when the structural sparsity assumption is partially violated, the sufficient changes introduced by the auxiliary variables $\rvu$ can still benefit the identifiability of latent variables. Therefore, the sparse mixing procedure and sufficient changes assumptions nicely complement each other---when one assumption is (partially) violated, the other can compensate, allowing for the identifiability of disentangled representations with milder assumptions. 

Based on this intuition, we establish a principled identifiability framework leveraging both types of constraints. Moreover, we develop a general generative model framework with domain encoding networks and a sparse mixing constraint for disentangled representation learning benefiting from both constraints. Specifically, the domain encoding network is used to impose the assumption of sufficient changes by modeling the distribution of latent variables given auxiliary variables, and the sparse mixing constraint is used to enforce the sparsity of the estimated mixing procedure. Our method is validated through a simulation and several widely used multi-domain image generation datasets. The performance demonstrates the effectiveness of the proposed framework.

\section{Preliminaries}
% \subsection{Data Generation Process}

% We start by giving some brief background on Nonlinear ICA and identifiability. In a typical nonlinear ICA setting, we consider that the observed variables $\rvx \in \mathbb{R}^n$ are generated via 
We start by giving a brief background on nonlinear independence component analysis (ICA) and identifiability. In a typical nonlinear ICA setting, the data generation process is shown as follows:
\begin{equation}
\label{equ:gen}
\begin{split}
    p(\rvz)=\prod_{i=1}^n p(z_i),\quad
    \rvx&=g(\rvz),
\end{split}
\end{equation}
where $\rvx=(x_1,\dots,x_n)$ and $\rvz=(z_1, \dots, z_n)$ denote the observed variables and the independent latent variables with the dimension of $n$, respectively, function $g:\rvz\rightarrow \rvx$ denotes a nonlinear mixing procedure. As mentioned in \cite{hyvarinen1999nonlinear}, without any further assumptions, there is an infinite number of possible solutions and these have no trivial relation with each other. Therefore, previous works introduce the auxiliary variables $\rvu$ and assume that the latent variables are conditionally independent given $\rvu$., i.e., $\rvz=f_u(\epsilon)$ and $p(\rvz|\rvu)=\prod_{i=1}^n p(z_i|\rvu)$, where $\epsilon$ is the independent exogenous noise variables. The goal of nonlinear ICA is to learn an estimated unmixing function $\hat{g}^{-1}: \rvx\rightarrow \hat{\rvz}$, such that $\hat{\rvz}=(\hat{z}_1,\dots,\hat{z}_n)$ consists of independent estimated sources.  

% We start with the data generation process for nonlinear ICA shown as follows:
% \begin{equation}
% \label{equ:gen}
% \begin{split}
%     % p_{\rvz|\rvu}(\rvz|\rvu)&=\prod_{i=1}^n p_{z_i|\rvu}(z_i|\rvu),\\
%     \rvz=f(\rvu,\epsilon), \quad
%     \rvx&=g(\rvz),
% \end{split}
% \end{equation}
% where $n$ denotes the number of latent sources, $\rvx=(x_1,\dots,x_n)$ denote the observed variables, $\rvz=(z_1, \dots, z_n)$ is the latent variables which is conditional independent given $\rvu$, and $g:\rvz\rightarrow \rvx$ denotes a nonlinear mixing function. The goal of nonlinear ICA is to learn an estimated unmixing function $\hat{g}^{-1}: \rvx\rightarrow \hat{\rvz}$, such that $\hat{\rvz}=(\hat{z}_1,\dots,\hat{z}_n)$ consists of independent estimated sources. 

% \textbf{Notations.} In this work, we use italicized letters such as $z_i$ to denote scalars and the bold letters like $\rvx_k$ to denote vectors. Moreover, we use $\rvz_{\text{Pa}(\rvx_k)}$ to denote the parents of $\rvx_k$ in the (graphical) generating process from ${\rvz}$ to $\rvx$.

\begin{table}[t]
\color{black}
\vspace{-5mm}
\centering
\caption{Notation Descriptions.}
\label{tab:notation}
\begin{tabular}{ll}
\toprule
\textbf{Symbol} & \textbf{Description} \\ 
\midrule
$z$ & Scalar variable \\
$\rvx$ & Random vector \\
$\rvx_k$ & $k$-th component of the random vector $\rvx$ \\
$\rvx_{(i)}$ & $i$-th instance or realization of the random vector $\rvx$ \\
$\rvx_{k, (i)}$ & $i$-th instance of the $k$-th component of the random vector $\rvx$ \\
$\rvx_{\backslash k}$ & Random vector $\rvx$ excluding the $k$-th component \\
\bottomrule
\end{tabular}
\end{table}

% \textcolor{black}{\textbf{Notations.} In this work, we use italicized letters such as $z$ to denote scalar and bold letters like $\rvx$ to denote vectors. Moreover, the first subscript denotes the position or index of a component in a random vector; for example, $\rvx_k$ represents the $k$-th component of the random vector $\rvx$. Another subscript, enclosed in parentheses, indicates a specific instance or realization of the random variable, for instance, $\rvx_{(1)}$ represents the first instance of the $\rvx$. Therefore, $\rvx_{k, (1)}$ denotes the the first instance of the $k$-th component of the random vector $\rvx$. Additionally, we let $\rvx_{\backslash k}$ be the random vector $\rvx$ with the $k$-th component excluded. Please refer to Appendix X for a comprehensive notation table.} 

% 为了更好地理解本文，我们用花体表示scalar，用粗体表示向量。另外，我们用第一个下标表示随机变量的向量的位置或者索引，例如x_k表示随机变量第k个分量。我们进一步用第二个带括号的下标表示随机变量的实例，例如x_{(1)}表示向量x的第一个实例。于是，x_{k,(1)}表示随机变量第k维度第（1）个实例。另外x_{\k}表示随机变量除了第k维变量外的其他变量。

To better understand our theoretical results, \textcolor{black}{we provide the description of notation as shown in Table \ref{tab:notation}} as well as the definition of subspace-wise identifiability and component-wise identifiability. 

\begin{definition}
[\textbf{Subspace-wise Identifiability of Latent Variables} \citep{li2024subspace}] The subspace-wise identifiability of $\rvz \in \mathbb{R}^{n_d}$ means that for ground-truth $\rvz$, there exists $\hat{\rvz}$ and an invertible function $h: \mathbb{R}^{n_d} \rightarrow \mathbb{R}^{n_d}$, such that $\rvz=h(\hat{\rvz})$.
\end{definition}

\begin{definition}
[\textbf{Component-wise Identifiability of Latent Variables} \citep{hyvarinen2016unsupervised}] The component-wise identifiability of $\rvz$ is that for each $z_i, i\in [n]$, there exists a corresponding estimated component $\hat{z}_j, j\in [n]$ and an invertible function $h_i:\mathbb{R} \rightarrow \mathbb{R}$, such that $z_i=h(\hat{z}_j)$.
\end{definition}

% \begin{definition}
% [\textbf{Mixing Graph}]
% \end{definition}

% Specifically, the subspace-wise identifiability of $\rvz \in \mathbb{R}^n$ means that for ground-truth $\rvz$, there exists $\hat{\rvz}$ and an invertible function $h: \mathbb{R}^n \rightarrow \mathbb{R}^n$, such that $\rvz=h(\hat{\rvz})$. And the component-wise identifiability of $\rvz$ is that for each $z_i, i\in [n]$, there exists a corresponding estimated component $\hat{z}_j, j\in [n]$ and an invertible function $h_i:\mathbb{R} \rightarrow \mathbb{R}$, such that $z_i=h(\hat{z}_j)$.

\section{Identifiability with Complementary Gains from Sufficient Changes and Sparse Mixing procedure}

In this section, we illustrate the identifiability results by leveraging the complementary gains from sufficient changes and sparse mixing procedures. Specifically, we first present the subspace identification result (\textbf{Theorem} \ref{theorem1}), where certain latent variables are subspace-wise identification. Based on the aforementioned result, we further establish the component-wise identification result (\textbf{Theorem} \ref{theorem2}), which uses conditional independence induced by sparse mixing procedure to constraint the solution space of a full-rank linear system. Additionally, we also show that the existing component-wise identification results \citep{khemakhem2020variational,pmlr-v162-kong22a} with $2n+1$ auxiliary variables are special cases of our approach (\textbf{Corollary} \ref{corollary2}) when the mixing procedure from latent sources to observations are fully-connected.

% where where each latent variable is component-wise identifiable by 

% In this section, we illustrate the identifiability of nonlinear ICA with the generalized sufficient changes principle. We first show the subspace identification result with generalized sufficient challenge principle (\textbf{Theorem} \ref{theorem1}), where the non-parents of $\rvx_k$ are subspace-wise Identification. Sequentially, we establish the component-wise identification result with generalized sufficient challenge principle (\textbf{Theorem} \ref{theorem2}), where each latent variable is component-wise identifiable with a milder condition compared with previous results.  Additionally, we also prove that the existing component-wise identification results \cite{khemakhem2020variational,pmlr-v162-kong22a} with $2n+1$ auxiliary variables are special cases of our approach (\textbf{Corollary} \ref{corollary2}), when the mixing processing between latent sources and observations are fully-connected. 
% For a better understanding of our results, we finally provide a case study to show why our method can mitigate the strong assumptions of the existing method.
% e.g., how we use two different values of auxiliary variable to identify three latent variables.

% the latent variables $\rvz$ are subspace-wise identified regarding non-parents of $\rvx_k$.

\subsection{Subspace Identifiability with Complementary Gains}

% \begin{theorem}
% (\textbf{Identifiability of Mixing Process})
% \end{theorem}

\begin{theorem}
\label{theorem1}
(\textbf{Subspace Identification with Complementary Gains}) 
Following the data generation process in Equation (\ref{equ:gen}), we further make the following assumption.
\begin{itemize}[leftmargin=*]
    \item \textit{\textbf{A1 (Smooth and Positive Density)}}: The probability density function of latent variables is smooth and positive, i.e., $p_{\rvz|\rvu}>0$ over $\mathcal{Z}$ and $\mathcal{U}$.
    \item \textit{\textbf{A2 (Conditional Independence)}}: Conditioned on $\rvu$, each $z_i$ is independent of any other $z_j$ for $i,j\in \{1,\dots,n\}, i\neq j$, i.e., $\log p_{\rvz|\rvu}(\rvz|\rvu)=\sum_{i=1}^n\log p_{z_i|\rvu}(z_i|\rvu)$.

    \item \textcolor{black}{\textbf{A3 (Generalized Sufficient Changes for Subspace Identification)}} \textcolor{black}{Let $\rvx_k$ be a subset of $\rvx$, $\rvx_{\backslash k}$ be the left variables, and $\rvx_{k, (1)}, \rvx_{k, (0)}$ be two different instance of $\rvx_k$. And vectors $w(k, \rvu)-w(k,0)$ with $\rvu=1,\dots,|\text{Pa}(\rvx_k)|$ are linearly independent, where vector $w(k, \rvu)$ is defined as: }
    \begin{equation}
    \small
    \color{black}
    \begin{split}
        w(k, \rvu) = (\frac{\partial \log p(\rvx_{k, (1)}, \rvx_{\backslash k}|\rvu)}{\partial z_1}, -\frac{\partial \log p(\rvx_{k, (0)}, \rvx_{\backslash k}|\rvu)}{\partial z_1},\dots, \\\frac{\partial \log p(\rvx_{k, (1)}, \rvx_{\backslash k}|\rvu)}{\partial z_{\text{Pa}(\rvx_k)}}, -\frac{\partial \log p(\rvx_{k, (0)}, \rvx_{\backslash k}|\rvu)}{\partial z_{\text{Pa}(\rvx_k)}}).
    \end{split}
    \end{equation}
    
    % \item \textit{\textbf{A3 (Generalized Sufficient Changes for Subspace Identification)}}: Suppose $\rvx_k$ be a subset of $\rvx$, such that the vectors $w(k, \rvu)-w(k,0)$ with $\rvu=1,\dots,|\text{Pa}(\rvx_k)|$ and \textcolor{black}{for all} two different values of $\rvx_{k, (1)}$ and $\rvx_{k, (0)}$ are linearly independent, where vector $w(k, \rvu)$ is defined as:
    % \begin{equation}
    % \begin{split}
    %     w(k, \rvu) = (\frac{\partial \log p(\rvx_{k, (1)}, \rvx_{\backslash k}|\rvu)}{\partial z_1}, -\frac{\partial \log p(\rvx_{k, (0)}, \rvx_{\backslash k}|\rvu)}{\partial z_1},\dots, \\\frac{\partial \log p(\rvx_{k, (1)}, \rvx_{\backslash k}|\rvu)}{\partial z_{\text{Pa}(\rvx_k)}}, -\frac{\partial \log p(\rvx_{k, (0)}, \rvx_{\backslash k}|\rvu)}{\partial z_{\text{Pa}(\rvx_k)}}).
    % \end{split}
    % \end{equation}
    % where $ m_1\neq m_2, \text{ and } m_1, m_2 \in \{0, 1, 2\}$.
\end{itemize}
Suppose that we learn $\hat{g}$ to achieve Equation (\ref{equ:gen}) with the minimal number of edges of the mixing process. Then, for every pair of $\hat{z}_j$ and $\rvx_{k}$ in which $\rvx_{k}$ does not \textcolor{black}{contribute} to $\hat{\rvz}_j$, i.e., $\frac{\partial \hat{\rvz}_j}{\partial \rvx_{k}} \equiv 0$, $\rvz_{\text{Pa}(\rvx_k)}$ is not the function of $\hat{z}_j$, i.e., $\frac{\partial \rvz_{\text{Pa}(\rvx_k)}}{\partial \hat{z}_j} \equiv 0$, where $\rvz_{\text{Pa}(\rvx_k)}$ is the parents of $\rvx_k$
\vspace{-3mm}
\end{theorem}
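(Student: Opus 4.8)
The plan is to convert the structural constraint $\partial \hat{z}_j/\partial \rvx_k \equiv 0$ into the latent-space constraint $\partial \rvz_{\text{Pa}(\rvx_k)}/\partial \hat{z}_j \equiv 0$ through a score-matching identity, using A3 as the final linear-algebraic lever. First I would set up the density-matching equation between the true and estimated representations. Writing $\rvz = h(\hat{\rvz})$ for the invertible indeterminacy map induced by $g$ and $\hat{g}$, the change-of-variables formula together with the conditional-independence assumption A2 (applied to both representations) gives, for every $\rvu$, $\sum_{m}\log q_m(z_m\mid \rvu) = \sum_{i}\log \hat{q}_i(\hat{z}_i\mid \rvu) - \log|\det J_h(\hat{\rvz})|$, where $q_m,\hat{q}_i$ are the conditional marginals and $J_h$ is the Jacobian of $h$. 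The crucial observation is that $\log|\det J_h|$ does not depend on $\rvu$, so it can later be eliminated by differencing across domains.

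Next I would differentiate this identity with respect to $\hat{z}_j$, obtaining $\sum_{m}\frac{\partial \log q_m(z_m\mid\rvu)}{\partial z_m}\,\frac{\partial z_m}{\partial \hat{z}_j} = \frac{\partial \log \hat{q}_j(\hat{z}_j\mid\rvu)}{\partial \hat{z}_j} - \frac{\partial \log|\det J_h|}{\partial \hat{z}_j}$, and then subtract the same equation evaluated at the baseline domain $\rvu=0$. This cancels the domain-invariant Jacobian term and leaves a relation whose left-hand side is a sum, over the true latents $m$, of score differences $\frac{\partial \log q_m}{\partial z_m}(z_m\mid\rvu)-\frac{\partial\log q_m}{\partial z_m}(z_m\mid 0)$ weighted by $\frac{\partial z_m}{\partial \hat{z}_j}$. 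I would then exploit the sparsity hypothesis: since $\partial \hat{z}_j/\partial \rvx_k\equiv 0$, the value of $\hat{z}_j$, and hence the estimated score $\partial\log\hat{q}_j(\hat{z}_j\mid\cdot)/\partial\hat{z}_j$ on the right-hand side, is invariant when $\rvx_k$ is moved from instance $\rvx_{k,(0)}$ to $\rvx_{k,(1)}$ while $\rvx_{\backslash k}$ is held fixed. Subtracting the relation at the two instances therefore annihilates the right-hand side, and because only the parent latents $\rvz_{\text{Pa}(\rvx_k)}$ respond to a change of $\rvx_k$ with $\rvx_{\backslash k}$ fixed, the surviving sum runs only over $m\in\text{Pa}(\rvx_k)$. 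The resulting coefficients are precisely the entries of $w(k,\rvu)-w(k,0)$, paired across the two instances as in the definition of $w$.

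Finally I would assemble these relations for $\rvu = 1,\dots,|\text{Pa}(\rvx_k)|$ into a single homogeneous linear system whose unknowns are the partial derivatives $\partial z_m/\partial \hat{z}_j$ (evaluated at the two instances) for $m\in\text{Pa}(\rvx_k)$ and whose coefficient rows are the vectors $w(k,\rvu)-w(k,0)$. By A3 these rows are linearly independent, so the system admits only the trivial solution; hence $\partial z_m/\partial \hat{z}_j\equiv 0$ for every $m\in\text{Pa}(\rvx_k)$, which is exactly the claim that $\rvz_{\text{Pa}(\rvx_k)}$ is not a function of $\hat{z}_j$.

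The hard part, I expect, will be the bookkeeping that makes this reduction exact: (i) justifying that learning $\hat{g}$ with the minimal number of edges forces $\partial \hat{z}_j/\partial \rvx_k\equiv 0$ and, through the invertibility of $h$, that varying $\rvx_k$ with $\rvx_{\backslash k}$ fixed perturbs only $\rvz_{\text{Pa}(\rvx_k)}$; and (ii) verifying that the two-instance differencing produces a coefficient matrix coinciding with the stacked vectors $w(k,\rvu)-w(k,0)$, and that the dimension count (one equation per domain against the $2|\text{Pa}(\rvx_k)|$ paired unknowns constrained through the instance structure) genuinely pins the relevant derivatives to zero rather than to a nontrivial kernel. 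Handling the evaluation of the various scores at two distinct latent configurations, and ruling out degenerate cancellations, is the delicate step that A3 is designed to control.
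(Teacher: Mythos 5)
Your proposal follows essentially the same route as the paper's proof: match the marginal distributions through the indeterminacy map $h=g^{-1}\circ\hat{g}$, take first-order derivatives of the log-density with respect to $\hat{z}_j$, difference across two instances of $\rvx_k$ (using $\partial\hat{z}_j/\partial\rvx_k\equiv 0$) so that the estimated-score side cancels and only terms indexed by $\text{Pa}(\rvx_k)$ survive, and then invoke A3 to force the surviving coefficients to vanish. The bookkeeping differs cosmetically: you work in latent space with an explicit $\log|\det J_h|$ term and cancel it by differencing across domains, whereas the paper applies the chain rule directly to the observation-space density $\log p(\rvx|\rvu)$, so no separate Jacobian term ever appears and the instance differencing alone produces the homogeneous equations.

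The genuine gap is precisely the dimension-count issue you flag at the end, and it is fatal as the argument stands: your final linear system cannot pin the derivatives to zero. With two instances $\rvx_{k,(0)},\rvx_{k,(1)}$, the unknown vector is $\bigl(\partial z_i/\partial\hat{z}_j\big|_{(1)},\ \partial z_i/\partial\hat{z}_j\big|_{(0)}\bigr)_{i\in\text{Pa}(\rvx_k)}$, of dimension $2|\text{Pa}(\rvx_k)|$, while you have only one equation per domain pair, i.e.\ $|\text{Pa}(\rvx_k)|$ rows of the form $w(k,\rvu)-w(k,0)$. Linear independence of those rows gives rank $|\text{Pa}(\rvx_k)|$, hence a kernel of dimension $|\text{Pa}(\rvx_k)|$: a homogeneous system with twice as many unknowns as independent equations always admits nontrivial solutions, so "the rows are linearly independent, therefore the solution is trivial" is a non sequitur. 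Note that the derivatives evaluated at the two instances are genuinely distinct unknowns here, since $z_i$ for $i\in\text{Pa}(\rvx_k)$ does change with $\rvx_k$, so you cannot collapse them. The paper closes this hole by introducing a \emph{third} instance $\rvx_{k,(2)}$ and writing the analogous equations for all three pairwise instance differences $(1,0)$, $(2,0)$, $(2,1)$ across the domains $\rvu_0,\dots,\rvu_{|\text{Pa}(\rvx_k)|}$; this yields a square $3|\text{Pa}(\rvx_k)|\times 3|\text{Pa}(\rvx_k)|$ system in the derivatives evaluated at the three instances, which is full rank under the paper's reading of A3, so the unique solution is zero at every instance and hence $\partial z_i/\partial\hat{z}_j\equiv 0$ for all $i\in\text{Pa}(\rvx_k)$. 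To repair your proof you need this third-instance construction (or some other mechanism that supplies at least as many independent equations as paired unknowns); everything before that step can stay as you wrote it.
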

\paragraph{Intuition of the Theoretical Results.} A  proof of Theorem \ref{theorem1} can be found in Appendix \ref{app:the1}. For a better understanding of the proposed theory, we provide an intuitive example shown as Figure \ref{fig:the1_intuition}, which includes the ground truth and estimated data generation processes with auxiliary variables $\rvu$. The solid lines denote the true mixing edges. \textcolor{black}{In the ground-truth generation process, because $\rvz_a$ is not adjacent to $\rvx_k$, $\frac{\partial \rvx_k}{\partial \rvz_a} \equiv 0$.} Since the powerful neural networks may choose to use the fully connected mixing procedure during training, leading to the redundant estimated mixing edges, i.e., the red dashed line. We will show how to remove these redundant estimated mixing edges via sparse mixing constraints in Section \ref{sec:model}. 
% \vspace{-2mm}
\begin{wrapfigure}{r}{6.5cm}
 % \vspace{-19pt}
	\centering
	\includegraphics[width=0.45\columnwidth]{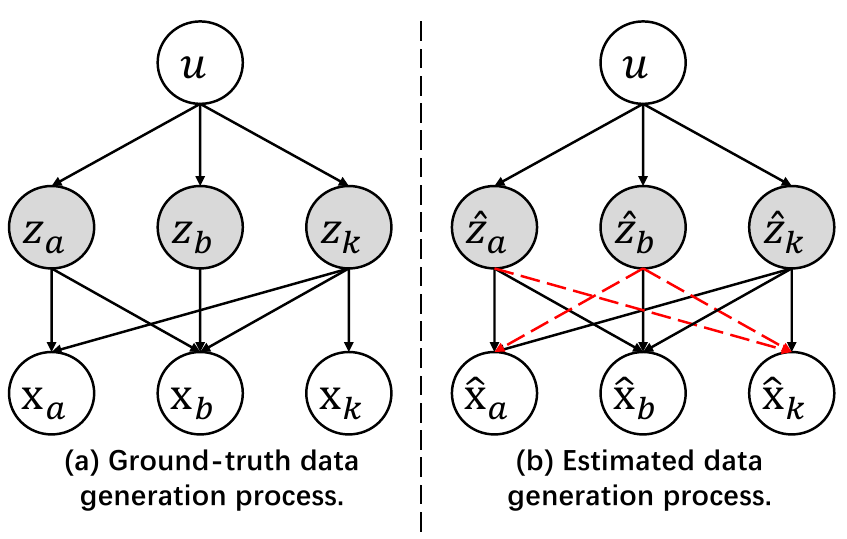}
  \vspace{-3mm}
	\caption{Example for Theorem 1 with ground-truth and estimated data generation processes. The red dashed lines denote the redundant estimated mixing edges.}
 \medskip\medskip
   \vspace{-3mm}
	\label{fig:the1_intuition}
\end{wrapfigure}

As shown in Figure \ref{fig:the1_intuition} (a), we let $\rvx_k$ and $\rvz_{k}$ be a subset of observed variables and its corresponding parents. 
% We further let $\rvz_o$ and $\rvx_o$ be other latent and observed variables. 
Since $\rvz_a$ do not connect to $\rvx_k$, we have $\rvx_k  \perp \!\!\! \perp \rvz_a |\rvu$. Sequentially, suppose that we have two different values of $\rvx_k$, i.e., $\rvx_{k, (0)}$ and $\rvx_{k, (1)}$, then we have:
% \begin{equation}
% \small
% \label{equ: intuition}
%     \frac{\partial \log p(\rvx_{\backslash k},\rvx_{k, (1)}|\rvu)}{\partial \rvz_{o}} - \frac{\partial \log p(\rvx_{\backslash k},\rvx_{k, (0)}|\rvu)}{\partial \rvz_{o}} = 0.
% \end{equation}
\begin{equation}
\small
\label{equ: intuition}
    \frac{\partial \log p(\rvx_a, \rvx_b,\rvx_{k, (1)}|\rvu)}{\partial \rvz_{a}} - \frac{\partial \log p(\rvx_a, \rvx_b,\rvx_{k, (0)}|\rvu)}{\partial \rvz_{a}} = 0.
\end{equation}
% Similarly, by removing the redundant estimated mixing edges the with the sparse mixing constraint, 
Similarly, by constraining the sparsity of the estimated mixing procedure, we can also achieve conditional independence $\hat{\rvz}_a \perp \!\!\! \perp \hat{\rvx}_k|\rvu$. Therefore, we can construct a full-rank linear system as shown in Equation (\ref{equ:intuition2}) by leveraging auxiliary variables.
% \begin{equation}
% \small
% \label{equ:intuition2}
% \begin{split}
%         \sum_{i\in \text{Pa}(\rvx_k)}\Big(\frac{\partial \log p(\rvx_{k, (1)}, \rvx_{\backslash k}|\rvu)}{\partial z_i} \cdot\frac{\partial z_i}{\partial \hat{z}_j} \Big|_{\rvx_k=\rvx_{k, (1)}}- \frac{\partial \log p(\rvx_{k, (0)}, \rvx_{\backslash k}|\rvu)}{\partial z_i}\cdot\frac{\partial z_i}{\partial \hat{z}_j} \Big|_{\rvx_k=\rvx_{k, (0)}}\Big) = 0. 
% \end{split}
% \end{equation}
\begin{equation}
\small
\label{equ:intuition2}
\begin{split}
        \sum_{z_i \in \rvz_k}\Big(\frac{\partial \log p(\rvx_a, \rvx_b,\rvx_{k, (1)}|\rvu)}{\partial z_i} \cdot\frac{\partial z_i}{\partial \hat{z}_j} \Big|_{\rvx_k=\rvx_{k, (1)}}- \frac{\partial \log p(\rvx_a, \rvx_b, \rvx_{k, (0)}|\rvu)}{\partial z_i}\cdot\frac{\partial z_i}{\partial \hat{z}_j} \Big|_{\rvx_k=\rvx_{k, (0)}}\Big) = 0. 
\end{split}
\end{equation}
Therefore, with a sufficient number of values of the auxiliary variables $\rvu$, the only solution of $\frac{\partial \rvz_i}{\partial \hat{\rvz}_{j}}$, for $z_i\in \rvz_{k}$ and $\hat{z}_j \in \{\hat{\rvz}_a, \hat{\rvz}_b\}$, is zero. This implies that $\{\rvz_a, \rvz_b\}$ is only the function of $\{\hat{\rvz}_a, \hat{\rvz}_b\}$, i.e.,  $\{\rvz_a, \rvz_b\}$ is subspace-wise identifiable.

\paragraph{Discussion of Assumptions.} We also provide detailed discussions of the assumptions and how they relate to real-world scenarios. The first two assumptions are standard in the componentwise identification of existing nonlinear ICA \citep{li2024subspace,pmlr-v162-kong22a, khemakhem2020variational}. The smooth and positive density assumption means that the latent variables change continuously. For example, $\rvz_i$ can be considered as the light directions of images, which are changing continuously. The conditional independence assumption implies that there are no causal relationships among $\rvz$. For instance, considering gender as an auxiliary variable, the light direction and the image resolution are conditionally independent. Finally, the sufficient changes assumption reflects that the conditional distributions are first-order differentiable. 
And the linear independence is the condition of a unique solution of the linear full-rank system.

% Based on these assumptions, subspace identification is ensured when the number of auxiliary variables is sufficient and the mixing process is sparse.

Although existing works like domain adaptation \citep{li2024subspace}, and disentangled representation learning \citep{zheng2023generalizing} also adopt the linear independence of first-order to achieve subspace identification, the assumption in our work is more general and easier to meet in practice. First, the aforementioned methods achieve subspace identifiability by assuming that the auxiliary variables do not have influence on all the latent variables, e.g., partition the latent variables into domain-invariant and domain-specific variables. However, our result does not need this assumption. Moreover, the aforementioned methods only use an adequate number of values of the auxiliary variables to meet the linear independence assumption,  which might be hard to satisfy in practice. Meanwhile, our method further harnesses the independence between observed and latent variables, which constraints the solution space of the full-rank linear system and hence decreases the requirement for the number of auxiliary variables.

% Intuitively, according to previous results, the latent variables can be identifiable without any auxiliary variables when the size of both $\rvz_o$ and $\text{Pa}(\rvx_k)$ is 1. As a result, we prove that the existing identification results with structural sparsity \cite{zheng2022identifiability,zheng2023generalizing} are special cases of the proposed generalized sufficient changes, which is shown as follows.

% \paragraph{Implications.}

% 现有的方法都是通过
% is similar to the linear independence assumptions in domain adaptation or causal video analysis

% other works for disentanglement, such as domain adaptation \cite{li2024subspace,pmlr-v162-kong22a}, video analysis \cite{chen2024caring}, and causal representation learning \cite{zhang2024causal}. Mathematically, it

% linearly independent assumptions in . It means that the 

% \begin{corollary}
% \label{corollary1}
% (\textbf{General Case for Component-wise Identification with Structural Sparsity.}) We follow the data generation process in Equation (\ref{equ:gen}) and make Assumptions A1, A2, and A3. In addition, we make the following assumptions:
% \begin{itemize}[leftmargin=*]
%     \item \textit{\textbf{A4 (Structural Sparsity)}}: For each latent variable $z_j$, there exists a set of observed variables such that the intersection of their parents is $z_j$.
% \end{itemize}
% Suppose that we learn $\hat{g}$ to achieve Equation (\ref{equ:gen}) with the minimal number of edges of the mixing process. Then $z_i$ is component-wise identifiable.
% \end{corollary}
% \paragraph{Discussion:}

\subsection{Component-Wise Identifiability with Complementary Gains}
\begin{theorem}
\label{theorem2}
(\textbf{Component-wise Identification with Complementary Gains}) Let the observations be sampled from the data generating process in Equation (\ref{equ:gen}). Suppose that Assumptions from Theorem 1 hold and also that we learn $\hat{g}, p_{\hat{\rvz}|\rvu}$ to match the marginal distribution between $\rvx$ and $\hat{\rvx}$ with the minimal number of edges of the mixing process. We further make the following assumption:
\vspace{-2mm}
\begin{itemize}[leftmargin=*]
    \item \textit{\textbf{A4 (Sufficient Changes for Component-wise Identification.)}}: Suppose $\rvx_k$ and $\rvx_a$ be a subset of $\rvx$ and $\rvx_k \cap \rvx_a=\emptyset$. We let the \textcolor{black}{$\rvz_{a}$} be the set of latent variables that connect to $\rvx_{a}$ but not connect to $\rvx_k$, such that for all different values of $\rvx_a$ such as $\rvx_{a, 0}, \rvx_{a, (1)}$, the vectors $\mathcal{V}(a, k, \rvu_s) - \mathcal{V}(a, k, 0)$ with $s=1,\dots, 2 |\rvz_a|$ are linearly independent. And the vector $\mathcal{V}(a, k, \rvu_s)$ is defined as follows:
    % \vspace{-2mm}
    \begin{equation}
    \small
    \begin{split}
        \mathcal{V}(a, k, \rvu_m)=\Big(& \frac{\partial ^2 \log p(\rvz_{a, (1)}, \rvz_{b}, \rvz_k|\rvu_m)}{(\partial z_i)^2}, - \frac{\partial ^2 \log p(\rvz_{a, (0)}, \rvz_{b}, \rvz_k|\rvu_m)}{(\partial z_i)^2}, \dots, \\& \frac{\partial \log p(\rvz_{a, (1)}, \rvz_{b}, \rvz_k|\rvu_m)}{\partial z_i}, - \frac{\partial \log p(\rvz_{a, (0)}, \rvz_{b}, \rvz_k|\rvu_m)}{\partial z_i}  \Big)_{z_i\in \rvz_{a}}.
    \end{split}
    \end{equation}
    % where $m_1\neq m_2$, and $m_1, m_2\in\{0, 1, 2\}$.
\end{itemize}
% \vspace{-5mm}
Suppose that we learn $\hat{g}$ to achieve Equation (\ref{equ:gen}) with the minimal number of edges of the mixing process. Then $\rvz_{a}$ is component-wise identifiable.
\end{theorem}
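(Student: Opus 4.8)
The plan is to promote the subspace result of Theorem~\ref{theorem1} to a component-wise statement by exploiting the \emph{second-order} structure encoded in A4. By Theorem~\ref{theorem1}, the block $\rvz_a$ (the sources feeding $\rvx_a$ but not $\rvx_k$) depends only on a corresponding block $\hat{\rvz}_a$ of the estimated sources, so there is an invertible $h$ with $\rvz_a = h(\hat{\rvz}_a)$. The decisive structural fact is that since both $g$ and the learned $\hat g$ are deterministic and do not depend on $\rvu$, the indeterminacy map $h = g^{-1}\circ\hat g$ restricted to this block is itself $\rvu$-independent; this is what will let us eliminate the Jacobian term. Using A2 for $\rvz$ and the factorized estimated model $p_{\hat{\rvz}\mid\rvu}$ for $\hat{\rvz}$, together with the change-of-variables formula, I would write
\begin{equation}
\sum_{j}\log p(\hat z_j\mid\rvu)=\sum_{i}\log p\big(h_i(\hat{\rvz}_a)\mid\rvu\big)+\log\lvert\det J_h\rvert,
\end{equation}
where the sums run over the indices of the block.

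Next I would differentiate once in $\hat z_s$ and once in $\hat z_t$ with $s\neq t$. The left-hand side is annihilated (each summand depends on a single coordinate), leaving
\begin{equation}
0=\sum_{z_i\in\rvz_a}\Big[\tfrac{\partial^2\log p(z_i\mid\rvu)}{\partial z_i^2}\,\tfrac{\partial h_i}{\partial\hat z_s}\tfrac{\partial h_i}{\partial\hat z_t}+\tfrac{\partial\log p(z_i\mid\rvu)}{\partial z_i}\,\tfrac{\partial^2 h_i}{\partial\hat z_s\partial\hat z_t}\Big]+\tfrac{\partial^2\log\lvert\det J_h\rvert}{\partial\hat z_s\partial\hat z_t}.
\end{equation}
Because $\log\lvert\det J_h\rvert$ is $\rvu$-independent, subtracting this identity at a reference value $\rvu=0$ from the same identity at $\rvu=\rvu_s$ cancels the Jacobian term and produces a homogeneous linear system whose coefficient vectors are exactly the differences $\mathcal V(a,k,\rvu_s)-\mathcal V(a,k,0)$ of A4, and whose $2\lvert\rvz_a\rvert$ unknowns are the quantities $\{\partial_{\hat z_s}h_i\,\partial_{\hat z_t}h_i\}$ and $\{\partial^2_{\hat z_s\hat z_t}h_i\}$, one of each per source in $\rvz_a$.

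I would then invoke the linear-independence clause of A4: with $2\lvert\rvz_a\rvert$ linearly independent difference vectors the coefficient matrix has full column rank, forcing the trivial solution, so in particular $\frac{\partial h_i}{\partial\hat z_s}\frac{\partial h_i}{\partial\hat z_t}=0$ for every $i$ and every $s\neq t$. Hence each $h_i$ depends on at most one coordinate of $\hat{\rvz}_a$, i.e. every row of $J_h$ has at most one nonzero entry. Since $h$ is invertible, $J_h$ is nonsingular, which forces exactly one nonzero entry per row and per column; thus $J_h$ is a generalized permutation matrix and $z_i=h_i(\hat z_{\pi(i)})$ for a permutation $\pi$ and invertible scalar maps $h_i$. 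This is precisely the component-wise identifiability of $\rvz_a$.

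The step I expect to be the main obstacle is the bookkeeping that makes the "difference across $\rvu$" reduction produce a system of exactly matching dimension: one must check that the $2\lvert\rvz_a\rvert$ unknowns (a first-order and a second-order quantity per source) are indexed by the $2\lvert\rvz_a\rvert$ values $\rvu_s$ and that the $\mathcal V$-vectors of A4 reproduce precisely the rows of the coefficient matrix, so that linear independence of the $\mathcal V$-differences is equivalent to full column rank. A secondary subtlety is rigorously justifying that $h$ is $\rvu$-independent and that matching the marginal of $\hat{\rvz}\mid\rvu$ with the factorized $p_{\hat{\rvz}\mid\rvu}$ genuinely yields conditional independence of the estimated sources, since only then does the mixed second derivative of the left-hand side vanish.
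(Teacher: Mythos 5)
Your overall skeleton coincides with the paper's: cancel the $\rvu$-independent log-Jacobian term by differencing the second-order cross-derivative identity across domains, force the trivial solution of a homogeneous full-rank system, and conclude from $\frac{\partial h_i}{\partial \hat{z}_s}\frac{\partial h_i}{\partial \hat{z}_t}=0$ plus invertibility that each relevant row of the Jacobian has exactly one nonzero entry. The organization differs: you first collapse to the block $\rvz_a=h(\hat{\rvz}_a)$ via Theorem~\ref{theorem1} and then run the standard sufficient-changes argument inside that block (essentially the paper's Corollary~\ref{corollary2} argument restricted to $\rvz_a$), whereas the paper differences its first-order marginal-matching equation across values of $\rvx_a$ \emph{before} taking the second cross-derivative, and keeps quantities evaluated at different values of $\rvx_a$ as distinct unknowns.

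That organizational choice is exactly where your proof has a genuine gap. Your linear system has $2|\rvz_a|$ unknowns, all evaluated at a single point (the products $\frac{\partial h_i}{\partial \hat{z}_s}\frac{\partial h_i}{\partial \hat{z}_t}$ and the second derivatives $\frac{\partial^2 h_i}{\partial \hat{z}_s\partial \hat{z}_t}$), so its coefficient rows are $2|\rvz_a|$-dimensional single-point vectors. But the A4 vectors $\mathcal{V}(a,k,\rvu_s)-\mathcal{V}(a,k,0)$ are $4|\rvz_a|$-dimensional: each stacks evaluations at two distinct values $\rvz_{a,(1)}$ and $\rvz_{a,(0)}$. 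Hence your claim that your coefficient vectors ``are exactly'' the A4 differences is dimensionally false, and the discrepancy is not cosmetic: linear independence of $2|\rvz_a|$ vectors in $\mathbb{R}^{4|\rvz_a|}$ does \emph{not} imply invertibility of the $2|\rvz_a|\times 2|\rvz_a|$ single-point submatrix your argument needs (for instance, all point-$(1)$ entries could vanish identically while the point-$(0)$ entries carry the independence). What your argument actually consumes is pointwise full rank of the single-point difference matrix, which \emph{implies} A4 but is not implied by it; so as written you prove the theorem only under a strictly stronger hypothesis than the one stated. The paper avoids this by keeping point-indexed unknowns: it uses three values of $\rvx_a$ and a $6|\rvz_a|\times 6|\rvz_a|$ system whose rows genuinely are two-point $\mathcal{V}$-differences. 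To repair your route, either track point-indexed unknowns as the paper does, or restate the sufficient-changes condition as the single-point rank condition you need. A secondary gap: your block reduction asserts an invertible $h$ with $\rvz_a=h(\hat{\rvz}_a)$, but Theorem~\ref{theorem1} only yields $\frac{\partial \rvz_{\text{Pa}(\rvx_a)}}{\partial \hat{z}_j}\equiv 0$ for $\hat{z}_j$ non-adjacent to $\rvx_a$; obtaining a square, invertible block (matching cardinalities of $\rvz_a$ and $\hat{\rvz}_a$) requires a counting argument from the minimal-edge condition, which you assert rather than prove---the paper sidesteps this by working with the full Jacobian $\mJ_h$ and using only that its rows indexed by $\rvz_a$ have at most one nonzero entry.
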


% \begin{wrapfigure}{r}{7.5cm}
%  \vspace{-19pt}
% 	\centering
% 	\includegraphics[width=0.55\columnwidth]{}
%   \vspace{-6mm}
% 	\caption{An intuitive example for Theorem 2, which includes the refinement data generation processes of Figure \ref{fig:the1_intuition}. Compared with the previous data generation processes, we partition $\rvz_o$ into $\rvz_{o_1}$ and $\rvz_{o_2}$ and assume that there exists a subset of observed variable $\rvx_r$, which is connected to $\rvz_{o_1}$ but not to $\rvz_{o_2}$. }
%  \medskip\medskip
%    \vspace{-7mm}
% 	\label{fig:the2_intuition}
% \end{wrapfigure}
% 理论2告诉了我们每一个隐变量的在受限条件下的可识别性。假如我们只有M个domain，那么当我们通过选择xk和xr来确定一个子集zo1, 如果m>2|zo1|+1, 那么zo1中每个隐变量都是可识别的，否则zo1中的隐变量是subspace可识别的。
% 如果对于任意的xk和xr，所对应的zo1的维度最大是满足 $M \geq 2|\rvz_{o_1}|+1$，那么我们可以用m识别所有的隐变量
\paragraph{Intuition of the Theoretical Results.} A proof can be found in Appendix \ref{app:the2}. For a better understanding, we also provide an example as shown in Figure \ref{fig:the1_intuition}. Besides assuming that $\rvx_k$ is not connected to $\rvz_{a}$ and $\rvz_{b}$, we further assume that there exists another subset of observed variables $\rvx_a$ that is connected to $\rvz_{a}$ but not $\rvz_{b}$. Therefore, we have $\rvz_{b} \perp \!\!\! \perp \rvx_a |\rvu$. Similar to the derivation process in Theorem 1, suppose that we have different values of observed variables $\rvx_a$, by further conducting second-order partial derivatives w.r.t. $z_l, l\in[n]$ based on Equation (\ref{equ:intuition2}), we can achieve component-wise identifiability by reducing the solution space of the full-rank linear system.

Intuitively, Theory \ref{theorem2} tells us about the degree to which each latent variable can be identified in the case of partial identifiability. For example, if we only have $M$ number of auxiliary variables, then when we try to determine a subset of latent variables $\rvz_{a}$ corresponding to two selected subsets of observed variables $\rvx_k$ and $\rvx_a$, if $M \geq 2|\rvz_{a}|+1$ (we let $|\rvz_{a}|$ be the dimension of $\rvz_{a}$), then each $z_i\in \rvz_{a}$ is component-wise identifiable; otherwise, they are subspace identifiable. Moreover, if for any $\rvx_k$ and $\rvx_a$, the corresponding $\rvz_{a}$ with the largest dimension satisfies $M \geq 2|\rvz_{a}|+1$, then the number of auxiliary variables is sufficient to achieve component-wise identifiability for all the latent variables. Since $|\rvz_{a}|\leq n$, we can use fewer values of the auxiliary variables to achieve component-wise identifiability. Please refer to Appendix \ref{app:discussion} for further discussion of the theories.

% Moreover, to reduce the solution space, we assume that the mixing process is sparse. We would like to highlight that the sparse mixing process is also easy to meet in practnice. For example, in the multi-domain image generation task, where an image of a man is translated to that of a woman, the latent variable with the concept of gender only affects some of the observed features, such as beard and hair, but will not affect the background and facial features.

% Although the generalized sufficient changes assumptions for component-wise identification his assumtpion requires that the conditional distributions are second-order derivatives, it also employ the multiple distribution led by  auxiliary variables and conditional independence led by sparse mixing process. 

% 和subspace可识别性相似，我们假设分布是二阶可导的，并且通过线性独立使得full-rank system有唯一解。为了reduce方程组的解空间，我们假设了mixing process是稀疏的。我们想强调这个假设在很多实际场景中是容易满足的，例如在图片翻译任务中，如果我们从男性迁移到女性，那么性别这个隐变量仅仅会影响部分的观测特征，例如胡子和头发，而不会影响背景和五官。

Compared with existing works for component-wise identification \cite{pmlr-v162-kong22a}, our result is also more general. First, our method does not require any assumptions about the distribution of latent variables, while \cite{khemakhem2020variational,hyvarinen2019nonlinear,hyvarinen2016unsupervised} assume that the latent variables follow the exponential family distributions. Second, we exploit the conditional independence brought by the sparse mixing procedure to constrain the solution space of the linear full-rank system, so fewer values of auxiliary variables are required.

\subsection{Special Case with Fully-connected Mixing procedure}
When the mixing procedure is fully connected, we show that the existing identification results with auxiliary variables are special cases of our theory, as shown in Corollary \ref{corollary2}.
\begin{corollary}
\label{corollary2}
(\textbf{General Case for Component-wise Identification with Full-connected Mixing procedure}) We follow the data generation process in Equation (\ref{equ:gen}) and make assumptions A1, A2, and A3. In addition, we make the following assumptions:
\begin{itemize}
    \item \textbf{A5 (Fully Connected Mixing procedure)}: Each latent variable $z_i$ is connected to each observed variable $x_j$, where $i,j\in[n]$.
\end{itemize}
Suppose that we learn $\hat{g}$ to achieve Equation (\ref{equ:gen}), $z_i$ is component-wise identifiable with $2n+1$ different values of auxiliary variables $\rvu$.
\end{corollary}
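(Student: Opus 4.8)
The plan is to treat Corollary~\ref{corollary2} as the degenerate specialization of Theorem~\ref{theorem2} in which assumption A5 removes every conditional-independence constraint that the sparse mixing procedure would otherwise supply, so the complementary gains collapse and one is left with the pure sufficient-changes argument. Concretely, under A5 there is no subset $\rvx_k$ that fails to receive a contribution from some latent variable, so the only nontrivial configuration in the language of Theorem~\ref{theorem2} is $\rvx_k=\emptyset$, forcing $\rvz_a=\rvz$ with $|\rvz_a|=n$. The component-wise guarantee of Theorem~\ref{theorem2} then demands $2|\rvz_a|$ linearly independent score-difference vectors, i.e.\ $2n+1$ distinct values of $\rvu$ (one reference plus $2n$ informative domains). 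I would therefore first record this counting and then re-derive the identifiability equation directly in the full-dimensional regime.

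The substantive derivation proceeds from the matched marginals. Using $\rvx=g(\rvz)=\hat g(\hat\rvz)$, I define the invertible transformation $h=\hat g^{-1}\circ g$ relating $\rvz$ and $\hat\rvz$, and write the change-of-variables identity, where A2 applied to both the true and the estimated sources produces the two separable sums and the log-determinant term is independent of $\rvu$:
\[
\sum_{i=1}^n \log p_{z_i|\rvu}(z_i\mid \rvu) = \sum_{j=1}^n \log p_{\hat z_j|\rvu}(h_j(\rvz)\mid \rvu) + \log\left|\det J_h(\rvz)\right|.
\]
I would then differentiate once with respect to $z_s$ and once with respect to $z_t$ for $s\neq t$, obtaining an identity in which the only $\rvu$-dependent quantities are the first and second derivatives of the univariate estimated log-densities, weighted respectively by $\tfrac{\partial h_j}{\partial z_s}\tfrac{\partial h_j}{\partial z_t}$ and $\tfrac{\partial^2 h_j}{\partial z_s\partial z_t}$.

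Next I would eliminate the $\rvu$-free log-determinant term by subtracting the identity evaluated at a reference value $\rvu_0$ from the identities at the remaining values $\rvu_1,\dots,\rvu_{2n}$. This yields a homogeneous linear system of $2n$ equations in the $2n$ unknowns $\{\tfrac{\partial h_j}{\partial z_s}\tfrac{\partial h_j}{\partial z_t}\}_j$ and $\{\tfrac{\partial^2 h_j}{\partial z_s\partial z_t}\}_j$, whose coefficient matrix is exactly the matrix of differences of the sufficient-changes vectors (the full-dimensional, second-order form of A3/A4 at $|\rvz_a|=n$). The linear-independence hypothesis, secured here by the $2n+1$ distinct domains, makes this matrix nonsingular, so the unique solution is the zero vector; in particular $\tfrac{\partial h_j}{\partial z_s}\tfrac{\partial h_j}{\partial z_t}=0$ for all $j$ and all $s\neq t$. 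Vanishing of every off-diagonal product means each row of $J_h$ has at most one nonzero entry, and invertibility of $h$ forces exactly one nonzero entry per row and column, so $h$ is a permutation composed with component-wise invertible maps and each $z_i$ is an invertible scalar function of a single $\hat z_j$.

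The main obstacle I anticipate is the linear-algebra step: making precise that the $2n$ subtracted sufficient-changes vectors are linearly independent, equivalently that the coefficient matrix has full rank, so that the homogeneous system admits only the trivial solution. This is exactly where the count $2n+1$ is consumed, and it is precisely why no reduction below $2n+1$ domains is available once A5 deprives us of any sparsity-induced conditional independence, recovering the classical component-wise identification result as the advertised special case.
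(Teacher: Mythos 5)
Your overall architecture coincides with the paper's: write the change-of-variables identity between the two factorized conditional priors, take second-order cross derivatives so the single-coordinate terms survive as coefficients, subtract a reference domain $\rvu_0$ to eliminate the $\rvu$-independent log-determinant term, obtain a $2n\times 2n$ homogeneous full-rank system whose only solution is zero, and conclude from invertibility of the Jacobian that each row has exactly one nonzero entry, i.e.\ component-wise identifiability. Your counting ($2n$ difference vectors, hence $2n+1$ domain values) and your framing of the corollary as the degenerate case of Theorem~\ref{theorem2} with $\rvz_a=\rvz$ also match the paper's intuition.

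However, there is a genuine gap in the direction of differentiation. You set $h=\hat g^{-1}\circ g$ (so $\hat{\rvz}=h(\rvz)$) and differentiate with respect to the \emph{true} coordinates $z_s,z_t$; as you yourself state, the $\rvu$-dependent coefficients of your linear system are then the first and second derivatives of the \emph{estimated} univariate log-densities $\log p_{\hat z_j|\rvu}$, with unknowns $\frac{\partial h_j}{\partial z_s}\frac{\partial h_j}{\partial z_t}$ and $\frac{\partial^2 h_j}{\partial z_s\partial z_t}$. But assumptions A3/A4 are statements about the \emph{ground-truth} conditional densities only; they say nothing about whatever prior $p_{\hat{\rvz}|\rvu}$ the learner fits. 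Matching the observed marginals forces the learned prior to vary across domains, but it does not deliver linear independence of its $2n$ difference vectors, so your claim that ``the coefficient matrix is exactly the matrix of differences of the sufficient-changes vectors'' is unjustified, and the full-rank step fails as written. The paper arranges the argument in the opposite direction: with $h:=g^{-1}\circ\hat g$ (so $\rvz=h(\hat{\rvz})$), it takes $\partial^2/\partial\hat z_j\partial\hat z_l$ of the identity; the left-hand side $\frac{\partial^2\log p(\hat{\rvz}|\rvu)}{\partial\hat z_j\partial\hat z_l}$ vanishes by the factorization of the \emph{learned} prior, and the coefficients of the resulting system are derivatives of the \emph{true} densities $\log p(z_i|\rvu)$ --- precisely the quantities the sufficient-changes assumption constrains --- with unknowns $\frac{\partial z_i}{\partial\hat z_j}\frac{\partial z_i}{\partial\hat z_l}$ and $\frac{\partial^2 z_i}{\partial\hat z_j\partial\hat z_l}$. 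Exchanging the roles of $\rvz$ and $\hat{\rvz}$ in your derivation (so that the learned factorization supplies the zero left-hand side and the true densities supply the coefficients) repairs the argument and reduces it to the paper's proof.
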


\paragraph{Intuition of the Theoretical Results.} Please refer to Appendix \ref{app:cor1} for a proof.  We provide the intuition behind the theoretical results as follows. Because the mixing procedure is fully connected, there are no pairs of latent variables and observed variables that are independent conditioned on $\rvu$, so we cannot constrain the solution space of the linear equations obtained by the second-order partial derivatives. As a result, for $n$-dimension latent variables, $2n+1$ auxiliary variables are needed to obtain compoenent-wise identifiability.

% 

% \subsection{Case Study}

% \subsection{Case Study}
% \begin{figure}
%     \centering
%     \includegraphics[width=0.8\linewidth]{}
%     \caption{Caption}
%     \label{fig:enter-label}
% \end{figure}

% \textbf{Example 2}

% 整体理论
% 讨论，假设xxxx

% special case1

% special case2

% example

% \section{Sufficient Changing Neural Architecture for Representation Learning}
% \section{Enforcing Sufficient Changes via Domain-Specific Encoding and Structural Sparsity Decoding}
% \section{Generative Model Framework for Disentangled Representation}

% \section{A Practical Implementation for Disentangled Representation Learning}
\section{General Generative Model Framework for Disentangled Representation Learning}
\label{sec:model}
% \begin{wrapfigure}{r}{5.5cm}
%  \vspace{-19pt}
% 	\centering
% 	\includegraphics[width=0.4\columnwidth]{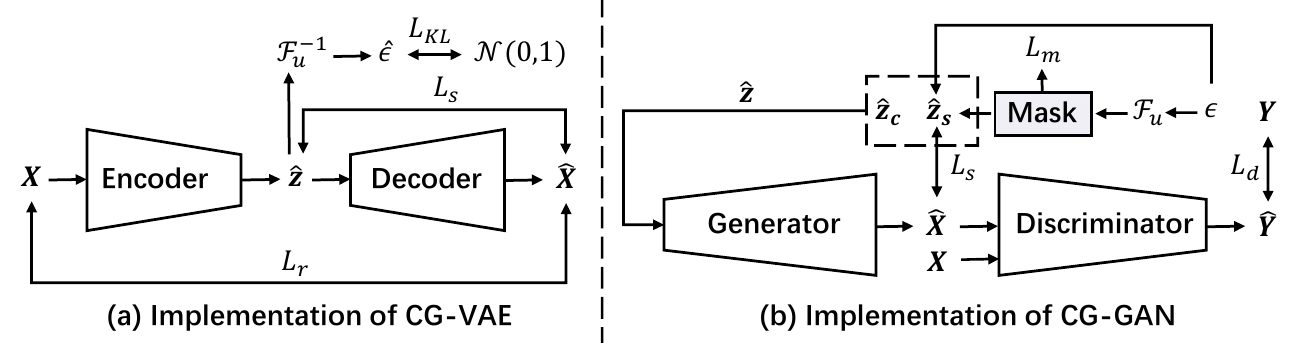}
%   \vspace{-3mm}
% 	\caption{Illustration of the proposed }
%  \medskip\medskip
%    \vspace{-7mm}
% 	\label{fig:model}
% \end{wrapfigure}
To levarage the developed theory for disentanglement, we propose a generative model framework for disentangled representation learning, which includes the domain-encoding neural networks and the sparse mixing constraint. Additionally, we provide two practical implementations based on variational autoencoder (VAE) \citep{kingma2013auto} and generative adversarial network (GAN) \citep{goodfellow2020generative}, respectively. For clarity, we name these implementations based on complementary gains as CG-VAE and CG-GAN, respectively. Moreover, We employ CG-VAE and CG-GAN for synthetic experiments and multi-domain image generation, respectively. Moreover, for a fair comparison, we follow the backbone architecture of \citep{xie2023multi}, which also leverage nonlinear ICA and minimal change assumption to achieve identifiability of disentangled representation. The model architecture of the CG-VAE and CG-GAN are shown in Figure \ref{fig:model}(a) and (b), respectively. 
% Please refer to Appendix X for the implementation details of these two models.

% which includes the variational autoencoder (VAE)-based and generative adversarial network (GAN)-based implementations. For the synthetic experiments, we employ VAE-based implementation \cite{kingma2013auto}. For the multi-domain image generation task, we employ the GAN-based implementation \cite{goodfellow2020generative} and follow the backbone architecture of \cite{xie2023multi}.

% In this section, we provide details of the VAE-based implementation as shown in Figure X, which includes a domain-encoding neural network and a sparse mixing constraint on decoder. 

\subsection{Implementation of CG-VAE}
% \begin{wrapfigure}{r}{10cm}
%  \vspace{-19pt}
% 	\centering
% 	\includegraphics[width=0.8\columnwidth]{figs/sufficuent_models.pdf}
%   \vspace{-3mm}
% 	\caption{Illustration of the VAE-based implementation. }
%  \medskip\medskip
%    \vspace{-7mm}
% 	\label{fig:model}
% \end{wrapfigure}
\begin{figure}[t]
    \centering
    \includegraphics[width=\linewidth]{}
    \vspace{-3mm}
    \caption{Illustrations of CG-VAE and CG-GAN, respectively. $\hat{\epsilon}$ and $\epsilon$ are the estimated and ground-truth noise variables, respectively. $\mathcal{F}_u$ denotes the domain-encoding neural networks. Note that in the CG-GAN, we partition the latent variables into the domain-invariant content variables $\rvz_c$ and domain-specific style $\rvz_s$. The $L_m$ denotes the mask restriction for automatically optimal dimension determination. And $Y$ and $\hat{Y}$ denote the ground-truth and predicted labels of real and fake samples.}
    % \vspace{4mm}
    \label{fig:model}
\end{figure}

We begin with the derivation of the evidence lower bound
(ELBO) in Equation (\ref{equ:elbo}):
\begin{equation}
\color{black}
\small
\label{equ:elbo}
\begin{split}
    &\log p(\rvx|\rvu)=\int\log  p(\rvx|\rvz)p(\rvz|\rvu)d\rvz=\int \log\frac{p(\rvx|\rvz)p(\rvz|\rvu) q(\rvz|\rvx)}{q(\rvz|\rvx)} d\rvz \\ \geq &\mathbb{E}_{q(\rvz|\rvx)}\log p(\rvx|\rvz) + \mathbb{E}_{q(\rvz|\rvx)}\log p(\rvu|\rvz) - KL(q(\rvz|\rvx)||p(\rvz|\rvu)),
\end{split}
\end{equation}
where $KL$ denotes the Kullback–Leibler divergence. Since the reconstruction of $\rvu$ is not the optimization goal, we remove the reconstruction of $\rvu$. \textcolor{black}{$q(\rvz|\rvx)$ implemented as an encoder neural architecture,} which outputs the mean and variance of the posterior distribution and $p(\rvx|\rvz)$ is parameterized as the decoder that takes latent variables $\rvz$ for reconstruct observed variables.

\paragraph{Domain-encoding Neural network.}
Similar to \cite{pmlr-v162-kong22a, zhang2024causal}, to model how auxiliary variables $\rvu$ influence the latent variables $\rvz$, we employ the normalizing flow-based architecture \citep{dinh2016density,huang2018neural,durkan2019neural} that takes the estimated latent variables $\hat{\rvz}$ and auxiliary variables to turn it into $\hat{\epsilon}$. Specifically, we have:
\begin{equation}
\small
    \hat{\epsilon}, \log\text{det}=\mathcal{F}_u(\hat{\rvz}),
\end{equation}
where $\mathcal{F}_u$ is the normalizing flow and $\log\text{det}$ is the log determinant of the conditional flow transformation on $\hat{\rvz}$. Therefore, by assuming that the noise term $\hat{\epsilon}$ follows a standard isotropic Gaussian, we can use the change of variable formula to compute the prior distribution of the latent variables:
\begin{equation}
\small
    \log p(\hat{\rvz}|\rvu) = \log p(\hat{\epsilon}) + \log\text{det} .
\end{equation}
\paragraph{Sparse Mixing Constraint.}To enforce the conditional independence between the estimated latent variables and the observed variables, we further devise the sparse mixing constraint on the partial derivative of $\hat{\rvx}$ with respect to $\hat{\rvz}$, which are shown as follows:
\begin{equation}
\small
    L_s=\sum_{i,j\in[n]} \Big|\frac{\partial \hat{x}_i}{\partial \hat{z}_j} \Big|.
\end{equation}
Finally, the total loss of the VAE-based implementation is shown as follows:
\begin{equation}
\small
    L_{vae} = \mathbb{E}_{q(\rvz|\rvx)}\log p(\rvx|\rvz) + \alpha L_s - \beta KL(q(\rvz|\rvx)||p(\rvz|\rvu))=L_r + \alpha L_s - \beta L_{KL},
\end{equation}
where $\alpha$ and $\beta$ are the hyper-parameters.

% \subsubsection{Model Summary}

\subsection{Implementation of CG-GAN}
We further provide the implementation of CG-GAN as shown in Figure \ref{fig:model} (b) since it is suitable for the multi-domain image translation task. In this case, we consider a more practical data generation process in \cite{xie2023multi}, where the latent variables are partitioned into the domain-invariant variable $\rvz_c$ and the domain-specific variables $\rvz_s$ for content and style, respectively. 
% Please refer to Figure X for the detailed description of the data generation process. 

\textbf{Domain-encoding neural networks}
Similar to CG-VAE, we also use the domain-specific flow function $\mathcal{F}_u$ to embed the domain information. Following \cite{xie2023multi}, we use a learnable mask $\mathcal{M}$ as shown in the gray block in Figure \ref{fig:model} (b) to adaptively determine the dimension and location of $\rvz_s$ within the latent variable $\rvz$.
\begin{equation}
\small
    \hat{\mathbf{z}} = \boldsymbol{\epsilon} + \mathcal{M} \odot \mathcal{F}_u(\boldsymbol{\epsilon})
\end{equation}
where mask $\mathcal{M}$ is a vector with the same shape as $\boldsymbol{\epsilon}$. An entry of 0 in $\mathcal{M}$ indicates that the corresponding component belongs to $\rvz_c$ and will not be affected by domain changes. If the entry is not 0, it corresponds to $\rvz_s$, which will vary according to domain changes.

\textbf{Sparse Mixing Constraint}
For enforcing the sparsity of the mixing procedure, we penalize the \( l_1 \)-norm of the Jacobian vector of x with respect to each \( z_s \), which is similar to our VAE-based architecture. Specifically, we use forward finite differences \citep{kim2021unsupervised} to approximate the Jacobian vector in real-world experiments because the large dimensions of the image tensor make calculating the Jacobian computationally expensive.
\[
    L_s = \left\|\sum_{i=1}^{n_s} \frac{G(z + \epsilon_i) - G(z)}{\| \epsilon_i \|}\right\|_1
\]
Here, \( \epsilon_i \) is the perturbation vector, with only the \( i \)-th entry being non-zero. Finally, the total GAN-based model loss are shown as follows:
\begin{equation}
\small
    L_{gan} = L_d + \gamma L_m + \alpha L_s
\end{equation}
where $\gamma$ and $\alpha$ are hyperparameters. $L_d$ is the traditional discriminator loss, $L_m$ is $l_1$-norm of the mask matrix, which is  introduced by \cite{xie2023multi}, and the $L_s$ is the sparse mixing constraint.

% \clearpage
% To make our theoretical results 

% Alternatively, for image xxx, gan.

% simulation, demonstrate the 

% \section{Related Works}

\section{Experiments}
\subsection{Synthetic Experiments}
\begin{figure}[t]
\vspace{-3mm}
    \centering
    \includegraphics[width=0.90\linewidth]{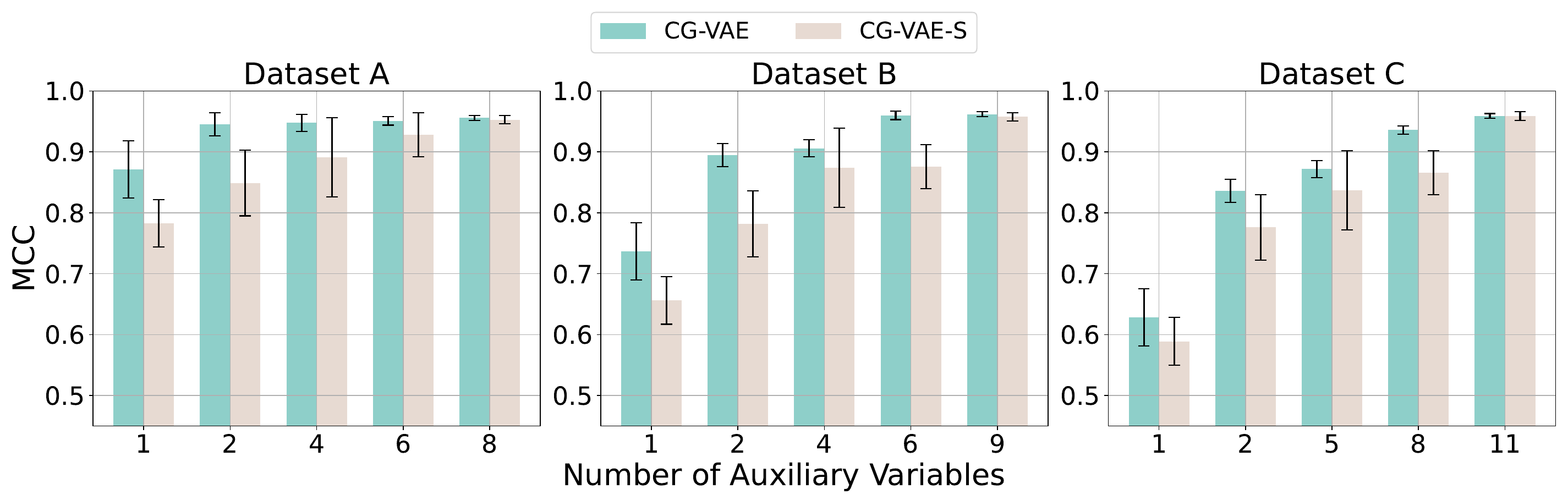}
    \vspace{-3mm}
    \caption{Experiments results on synthetic datasets. The horizontal axis represents the number of auxiliary variables, and the vertical axis represents the values of MCC. }
    \label{fig:synthetic_exp}
    \vspace{-2mm}
\end{figure}
\subsubsection{Experiment Setup}
\textbf{Data Generation.} We generate the synthetic data with multiple distributions. Specifically, we devise three datasets (Dataset A, B, and C) with 8, 9, and 11 numbers of domains, respectively. Please refer to Figure \ref{fig:synthetic_datasets} for causal graphs of these data generation processes. The noise variables $\epsilon$ are sampled from a factorized Gaussian distribution for all datasets. We let the data generation process from latent variables to observed variables be MLPs with the LeaklyReLU activation function. Moreover, the dataset is randomly split into $90\%$ for training and $10\%$ for testing.  

\textbf{Evaluation Metrics.} For evaluation, We use the mean correlation coefficient (MCC) \citep{hyvarinen2016unsupervised} between the true latent variables $\rvz$ and the estimated ones $\hat{\rvz}$. A higher MCC denotes the better identification performance the model can achieve. \textcolor{black}{We also consider other metrics for disentanglement \citep{eastwood2018framework} like Completeness, Disentangle Score, Informativeness, $R^2$, and MSE. Appendix \ref{app:metrics} provides the introduction of these metrics. Note that the higher the values of completeness, disentangle score, and R2, the better the performance of disentanglement, and the lower the values of informativeness, the better the performance of disentanglement.} We repeat each experiment over 3 random seeds for each experiment and report the mean and standard deviation. Please refer to Appendix \ref{app:syn_exp} for the implementation of the synthetic experiments.

\textcolor{black}{\textbf{Baselines: }{To evaluate the effectiveness of our method, we consider the following baselines. We first consider the standard $\beta$-VAE \citep{higgins2017beta}, FactorVAE \citep{kim2018disentangling}, and SlowVAE \citep{klindt2020towards}. We further consider other nonlinear ICA-based methods like iVAE \citep{khemakhem2020variational} and iMSDA \citep{pmlr-v162-kong22a}, which use auxiliary variables.}}

% \paragraph{Implementation.} Both the encoder and decoder in the proposed CG-VAE are composed of a 5-layer multilayer perception (MLP) with the LeakyReLU activate function. 2. We trained the VAE network using the AdamW optimizer with a learning rate of $3e-3$ and a mini-batch size of 64, the s. The coefficients for $\alpha$ and $\beta$ are $5e-2$ and $1e-3$, respectively. 

% We trained the VAE using the AdamW optimizer with a learning rate of 
% \paragraph{Baselines.} To evaluate the effectiveness of our theoretical results, we 
% 
% objective function
% 数据集，metric
\vspace{-2mm}
\subsubsection{Results and Discussion}
\vspace{-1mm}
% We conduct an ablation study to verify the necessity of the proposed assumptions. Specifically, we consider 
% To evaluate the necessity of the proposed assumptions, we consider the 
% 我们通过设计以下实验以证明本文假设的必要性
% To evaluate the necessity of sufficient changes, we evaluate the CG-VAE on the datasets with only one domain

% Please add the following required packages to your document preamble:
% \usepackage{booktabs}
\begin{table}[t]
\centering
\color{black}
\caption{\textcolor{black}{Mean correlation coefficient results on dataset A of different methods.}}
\vspace{1mm}
\label{tab:mcc}
\renewcommand{\arraystretch}{0.5}

\label{tab:mcc}\resizebox{\textwidth}{!}{%
\begin{tabular}{@{}c|ccccccc@{}}
\toprule
\begin{tabular}[c]{@{}c@{}}Number of \\ Domain\end{tabular} & CG-VAE & CG-GAN & iMSDA & iVAE  & FactorVAE & Slow-VAE & $\beta$-VAE \\ \midrule
2                                                           & 0.953 (0.007)            & 0.935 (0.016) & 0.869 (0.002) & 0.756 (0.105) & 0.728 (0.038) & 0.856 (0.053) & 0.810 (0.056) \\
4                                                           & 0.952 (0.017)            & 0.927 (0.005) & 0.881 (0.043) & 0.807 (0.060) & 0.804 (0.031) & 0.798 (0.054) & 0.706 (0.083) \\
6                                                           & 0.956 (0.016)            & 0.955 (0.007) & 0.936 (0.025) & 0.815 (0.029) & 0.755 (0.006) & 0.913 (0.031) & 0.795 (0.041) \\
8                                                           & 0.971 (0.009)            & 0.944 (0.010) & 0.959 (0.017) & 0.920 (0.003) & 0.805 (0.004) & 0.921 (0.026) & 0.881 (0.045) \\ \bottomrule
\end{tabular}}
\end{table}

\begin{table}[t]
\centering
% \vspace{-3mm}
\color{black}
\renewcommand{\arraystretch}{0.5}
\caption{\textcolor{black}{Completeness results on dataset A of different methods.}}
\vspace{1mm}
\label{tab:completeness}\resizebox{\textwidth}{!}{%
\begin{tabular}{@{}c|ccccccc@{}}
\toprule
\begin{tabular}[c]{@{}c@{}}Number of \\ Domain\end{tabular} & \multicolumn{1}{c}{CG-VAE} & CG-GAN & iMSDA & iVAE  & FactorVAE & Slow-VAE & $\beta$-VAE \\ \midrule
2                                                           & 0.608 (0.115)                    & 0.533 (0.074)    & 0.405 (0.063)    & 0.348 (0.035)    & 0.349 (0.057)    & 0.343 (0.072)    & 0.319 (0.008)    \\
4                                                           & 0.632 (0.054)                    & 0.659 (0.132)    & 0.477 (0.028)    & 0.358 (0.148)    & 0.453 (0.069)    & 0.356 (0.083)    & 0.427 (0.095)    \\
6                                                           & 0.626 (0.138)                    & 0.681 (0.062)    & 0.663 (0.066)    & 0.368 (0.029)    & 0.529 (0.064)    & 0.596 (0.107)    & 0.525 (0.008)    \\
8                                                           & 0.729 (0.064)                    & 0.809 (0.017)    & 0.758 (0.042)    & 0.435 (0.008)    & 0.506 (0.057)    & 0.687 (0.039)    & 0.644 (0.113)    \\ \bottomrule
\end{tabular}}
\end{table}

% Please add the following required packages to your document preamble:
% \usepackage{booktabs}
\begin{table}[t]
\vspace{-3mm}
\centering
\renewcommand{\arraystretch}{0.5}
\color{black}
\caption{\textcolor{black}{Disentanglement score results on dataset A of different methods.}}
\vspace{1mm}\resizebox{\textwidth}{!}{%
\label{tab:disentangle_score}
\begin{tabular}{@{}c|ccccccc@{}}
\toprule
\begin{tabular}[c]{@{}c@{}}Number of \\ Domain\end{tabular} & \multicolumn{1}{c}{CG-VAE} & CG-GAN & iMSDA & iVAE  & FactorVAE & Slow-VAE & $\beta$-VAE \\ \midrule
2                                                           & 0.586 (0.118)                  & 0.529 (0.066)       & 0.306 (0.032)       & 0.347 (0.089)       & 0.320 (0.091)       & 0.347 (0.068)       & 0.320 (0.009)       \\
4                                                           & 0.589 (0.076)                  & 0.666 (0.127)       & 0.498 (0.033)       & 0.315 (0.002)       & 0.380 (0.061)       & 0.338 (0.080)       & 0.403 (0.094)       \\
6                                                           & 0.611 (0.125)                  & 0.672 (0.048)       & 0.514 (0.045)       & 0.548 (0.042)       & 0.537 (0.078)       & 0.620 (0.089)       & 0.522 (0.036)       \\
8                                                           & 0.697 (0.019)                  & 0.719 (0.013)       & 0.710 (0.026)       & 0.564 (0.076)       & 0.626 (0.011)       & 0.655 (0.073)       & 0.692 (0.084)       \\ \bottomrule
\end{tabular}}
\end{table}

The experimental results of the synthetic datasets are
shown in Figure \ref{fig:synthetic_exp}. \textcolor{black}{To evaluate the effectiveness of the proposed sparsity constraint, we remove $L_s$ from CG-VAE and name it CG-VAE-S.} According to the experiment results, we can obtain the following conclusions: 
\textbf{1)} when the number of auxiliary variables is sufficient, both CG-VAE and CG-VAE-S achieve relatively high MCC (around 0.95), demonstrating that sufficient changes brought by auxiliary variables benefit disentangled representation learning. 
\textbf{2)} as the number of auxiliary variables decreases, CG-VAE-S shows a noticeable decline in performance across the three datasets, while CG-VAE remains stable. In particular, for Dataset A, CG-VAE performs well even with just two domains, emphasizing the importance of sparse mixing constraints for achieving identifiable disentangled representation.
\textbf{3)} When the number of domains is reduced to just one, both CG-VAE and CG-VAE-S perform poorly, indicating that sufficient variation is essential for disentangled representation. At the same time, we find that even with only a single domain, CG-VAE still performs better than CG-VAE-S, this is because the mapping from the ground truth to estimated latent variables is constrained by sparse mixing procedure, making certain latent variable subspaces identifiable. 
\textcolor{black}{We also compare the proposed method with different baselines to evaluate the performance of disentanglement. Experiment results with the MCC, completeness, and disentanglement \citep{eastwood2018framework} are shown in Tables \ref{tab:mcc},\ref{tab:completeness}, and \ref{tab:disentangle_score}, respectively. Compared with the baselines, we can find that the proposed method achieves the best disentanglement performance in different metrics even the the domain number is limited, which verifies our theoretical results. Please refer to Appendix \ref{app:more_exp} for more experiment results.}

\begin{table}[H]
\vspace{-2mm}
\color{black}
\caption{\textcolor{black}{Results of two domain image generation on CelebA and MNIST datasets.}}
\vspace{1mm}
\label{tab:exp}
\renewcommand{\arraystretch}{0.8}
\resizebox{\textwidth}{!}{%
\begin{tabular}{@{}c|c|cccccc@{}}
\toprule
\textbf{Dataset} & \textbf{Metrics} & \textbf{TGAN} & \textbf{StyleGAN2-ADA} & \textbf{i-StyleGAN} & \textbf{CG-VAE} & \textbf{CG-GAN-M} & \textbf{CG-GAN} \\ \midrule
\multirow{2}{*}{CelebA} & FID ↓ & 4.89 & 3.57 & 2.65 & 3.02 & 2.60 & \textbf{2.57} \\
 & DIPD ↓ & 1.11 & 1.00 & 0.95 & \textbf{0.93} & \textbf{0.93} & \textbf{0.93} \\ \midrule
\multirow{2}{*}{MNIST} & FID ↓ & 67.45 & 117.64 & 16.6 & 18.18 & 31.74 & \textbf{8.74} \\
 & Joint-FID ↓ & 155.21 & 386.19 & 107.39 & 111.76 & 81.53 & \textbf{67.04} \\ \bottomrule
\end{tabular}%
}
\end{table}

\subsection{Real-world Experiments}
\subsubsection{Experiment Setup}
\textbf{Datasets.}  
We use the CelebA (\citep{liu2015faceattributes}) and the MNIST dataset (\citep{lecun1998gradient}) for multi-domain image generation. To further evaluate the effectiveness of our theoretical results in real-world applications, we choose two domains for all the datasets. Specifically, in CelebA, we create two domains based on the presence or absence of eyeglasses, subsampling the no-eyeglasses domain to balance the sample sizes between the two. For MNIST, we use the training portion of the dataset and generate two domains consisting of red and green digits.

\textbf{Evaluation Metrics.} 
We evaluate our method using the Frechet Inception Distance (FID), a widely used metric for measuring the distribution divergence between generated and real images, where lower FID scores indicate better performance. Since the CelebA datasets lack paired data, we use Domain-Invariant Perceptual Distance (DIPD) to assess semantic correspondence \citep{9010865}. DIPD calculates the distance between instance-normalized Conv5 features of the VGG network. As for the MNIST dataset with ground truth tuples, we first compute the inception features of images in each tuple, and concatenate the features. Sequentially, we compute the Frechet distance between the features of the ground truth and generated tuples, referring to this as Joint-FID, as it measures the divergence between joint distributions. More addition, we remove the restriction of mask $\mathcal{M}$ and name this model variant as CG-GAN-M.

\textbf{Implementation and Baselines.} For a fair comparison, we build our method based on the official PyTorch implementation of i-StyleGAN \citep{xie2023multi}, which also follows the backbone networks of StyleGAN2-ADA \citep{karras2020training}. Moreover, to verify the effectiveness of our introduced modules, we employ the default hyper-parameters of i-StyleGAN and only change the values of $\alpha$. As for the normalizing flow-based domain-encoding neural networks, we employ the deep sigmoid flow (DSF), which is designed to be component-wise strictly increasing. Since our method is built on StyleGAN2-ADA \citep{karras2020training} and i-StyleGAN \citep{xie2023multi}, we consider TGAN (\cite{shahbazi2022collapse}) as the compared method to evaluate the effectiveness of the sparse mixing constraint. 
\textcolor{black}{Moreover, we further consider the CG-VAE, which uses the same decoder architecture as the generator of the i-StyleGAN. }In practice, we train the Stylegan2-ADA and i-Stylegan for 25000k images, which is the default setting of Stylegan. For our method, to save the training time, we load the checkpoint of i-Stylegan at 20000k images, and train for 5000k images.
% Moreover, we also consider the latest multi-domain image generation method like AugSelf-GAN \cite{hou2024augmentation}, which also use StyleGAN2 as backbone networks. 

% It is worth mentioning that for fair comparison, we changed the implementation of AugSelf-GAN from unconditional StyleGAN2 architecture to conditional StyleGAN2 architecture.

\subsubsection{Results and Analysis}
\begin{figure}[t]
% \vspace{-1mm}
    \centering
    \includegraphics[width=\linewidth]{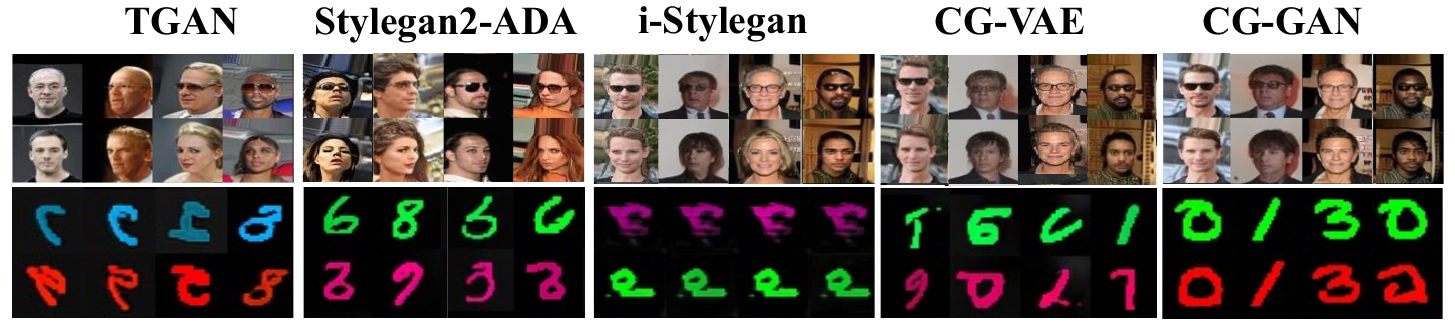}
    \vspace{-5mm}
    \caption{\textcolor{black}{Samples of multi-domain image generation on the CelebA and MNIST datasets. i-StyleGAN, CG-VAE, and CG-GAN share the same noise input $\epsilon$. We find that when the number domain is insufficient, iStyleGAN will produce unnecessary changes.}}
    \label{fig:visualize}
\end{figure}
\begin{figure}[t]
    \centering
    \includegraphics[width=\linewidth]{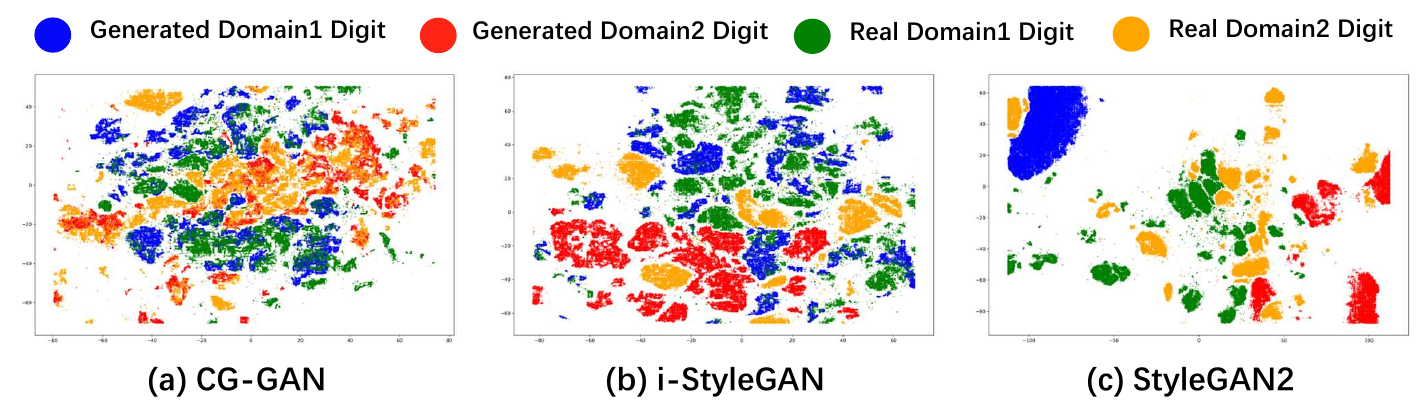}
    \vspace{-5mm}
    \caption{The t-SNE visualization of different methods. Blue and red points represent features from generated images, while green and yellow points correspond to features from real images. A greater overlap between generated and real points within the same domain reflects improved performance.}
    \label{fig:tsne}
\end{figure}
\paragraph{Comparison with Baselines.}
Experimental results on the CelebaA and MNIST datasets with only two domains are shown in Table \ref{tab:exp}. We also provide some generated samples in Figure \ref{fig:visualize}. According to the experiment results, we can find that the proposed method outperforms all other baselines on all the image generation tasks even though there are only two domains. Specifically, our method outperforms StyleGAN2-ADA with a clear margin, which reflects the importance of sufficient changes and the spare mixing restriction for disentangled representation learning. Moreover, compared with i-StyleGAN, we can find that our method also achieves a better result. \textcolor{black}{We also find that the performance of CG-VAE is the worst, this is because the VAE-based models usually employ the Gaussian prior assumption \citep{bredell2023explicitly}.} The qualitative results shown in Figure \ref{fig:tsne} are clear. For example, in the adding glasses image generation task, we find that the images from i-StyleGAN change the glasses and gender simultaneously. Meanwhile, the image generated from our method is stable, showing that our method can achieve better disentanglement under weak assumptions. Finally, we can find that the performance of CG-GAN-M is lower than that of standard CG-GAN, showing the necessity of the learnable mask for image generation. \textcolor{black}{Similar to the several results of the VAE-based methods \citep{higgins2017beta,pandey2021vaes}, the images generated by CG-VAE are less clear than those generated by CG-GAN due to their Gaussianity assumption. }

\textbf{Visualization.} We also provide the t-SNE visualization, as shown in Figure \ref{fig:tsne}. According to the experimental results, within the same domain, the data generated by our method shows the highest overlap with the corresponding ground truth.

\section{Conclusion}\vspace{-2mm}
This paper introduces a disentangled representation learning framework with identifiability guarantees by harnessing the complementary nature of sufficient changes and sparse mixing procedures, boosting its applicability to real-world scenarios. Specifically, the conditional independence induced by the sparse mixing procedure simplifies the mapping from estimated to ground truth latent variables, ensuring subspace identifiability. Meanwhile, sufficient changes promote component-wise identifiability of latent variables. Theoretical findings are validated through both synthetic and real-world experiments. Future work aims to extend these results to related tasks, such as transfer learning and causal representation learning. The empirical study in the paper preliminarily focuses on visual disentanglement--applications to more complex real-world scenarios are to be given.

\section{Acknowledgment}
We would like to acknowledge the support from NSF Award No. 2229881, AI Institute for Societal Decision Making (AI-SDM), the National Institutes of Health (NIH) under Contract R01HL159805, and grants from Quris AI, Florin Court Capital, and MBZUAI-WIS Joint Program. Moreover, this research was supported in part by the National Science and Technology Major Project (2021ZD0111501),  Natural Science Foundation of China (U24A20233).

\bibliography{iclr2025_conference}
\bibliographystyle{iclr2025_conference}

\clearpage
\appendix

% %
%   \textit{\large Appendix for}\\ \ \\
%       {\large \bf ``GENERALIZED SUFFICIENT CHANGE PRINCIPLE FOR THE IDENTIFICATION OF NONLINEAR ICA''}\
% %
% \vspace{.1cm}

% \newcommand{\beginsupplement}{%
% 	\setcounter{table}{0}
% 	\renewcommand{\thetable}{A\arabic{table}}%
% 	\setcounter{figure}{0}
% 	\renewcommand{\thefigure}{A\arabic{figure}}%
% 	\setcounter{algorithm}{0}
% 	\renewcommand{\thealgorithm}{A\arabic{algorithm}}%
% 	\setcounter{section}{0}
% 	\renewcommand{\thesection}{A\arabic{section}}%
% }

% \beginsupplement
% %  issue is to how to identify seemingly complete types of constraint
% {\large Appendix organization:}
% 理论1， 充分例子解释充分变化

% \DoToC 

\clearpage

\textit{\large Supplement to}\\ 
      {\Large \bf ``Synergy Between Sufficient Changes and Sparse Mixing Procedure for Disentangled Representation Learning''}\
      
{\large Appendix organization:}
\DoToC
\clearpage

\section{Proof}

\newtheorem{thm}{Theorem}
\newenvironment{thmbis}[1]
  {\renewcommand{\thethm}{\ref{#1}$'$}%
   \addtocounter{thm}{-1}%
   \begin{thm}}
  {\end{thm}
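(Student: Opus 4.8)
The plan is to show that the sparsity of the estimated mixing $\hat{g}$, together with the true sparsity around $\rvx_k$, forces a block of the Jacobian of the estimated-to-true latent map to vanish. First I would record that, since $\hat{g}$ is learned to realize the generating process in Equation~(\ref{equ:gen}) with the minimal number of mixing edges, the estimated model reproduces the observed conditional law $p(\rvx\mid\rvu)$ for every $\rvu$. Because both $g$ and $\hat g$ are invertible and smooth (A1), this lets me define the reparametrization $\rvz = h(\hat{\rvz})$ with $h = g^{-1}\circ\hat g$, so that the target claim $\partial \rvz_{\text{Pa}(\rvx_k)}/\partial \hat z_j \equiv 0$ reduces to the statement that the entries $\partial z_i/\partial \hat z_j$ vanish for every true parent $z_i\in\rvz_{\text{Pa}(\rvx_k)}$.

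Next I would turn both sparsity facts into derivative identities. On the true side, any latent $z_i$ that is not a parent of $\rvx_k$ satisfies $z_i \perp\!\!\!\perp \rvx_k\mid\rvu$; differentiating $\log p(\rvx\mid\rvu)$ in the $z_i$-direction and comparing two instances $\rvx_{k,(1)},\rvx_{k,(0)}$ of $\rvx_k$ with $\rvx_{\backslash k}$ held fixed gives the vanishing first difference of Equation~(\ref{equ: intuition}), i.e. the score in a non-parent direction is insensitive to $\rvx_k$. On the estimated side, the hypothesis $\partial \hat z_j/\partial \rvx_k\equiv 0$ yields the matching conditional independence $\hat z_j\perp\!\!\!\perp\rvx_k\mid\rvu$, hence the same kind of vanishing difference for $\partial \log p(\rvx\mid\rvu)/\partial \hat z_j$.

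I would then combine the two by expanding $\partial \log p(\rvx\mid\rvu)/\partial \hat z_j = \sum_i \big(\partial \log p(\rvx\mid\rvu)/\partial z_i\big)\,(\partial z_i/\partial \hat z_j)$ through $h$ and taking the $\rvx_k$-difference. The non-parent terms drop out by the true-side identity of the previous step, leaving exactly the homogeneous relation of Equation~(\ref{equ:intuition2}), a linear constraint on the unknowns $\{\partial z_i/\partial \hat z_j : z_i\in\rvz_{\text{Pa}(\rvx_k)}\}$ whose coefficients are the score derivatives assembled into the vector $w(k,\rvu)$. Sweeping $\rvu$ over its values and subtracting the reference $w(k,0)$ produces the family $w(k,\rvu)-w(k,0)$; assumption A3 asserts these are linearly independent, so the resulting system is full rank and admits only the trivial solution, forcing $\partial z_i/\partial \hat z_j\equiv 0$ for all parents $z_i$. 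Read complementarily, this says the parent block of $\rvz$ is a function of the estimated coordinates other than $\hat z_j$, which is the claimed subspace identifiability.

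The step I expect to be the main obstacle is the chain-rule bookkeeping in the third paragraph: one must verify that, after the $\rvx_k$-difference is taken, every non-parent contribution genuinely cancels even though the Jacobian factors $\partial z_i/\partial \hat z_j$ are evaluated at the two distinct points $\rvx_{k,(1)}$ and $\rvx_{k,(0)}$, and that the surviving coefficients align exactly with the $w(k,\rvu)$ defined in A3 so that linear independence can be invoked to certify full rank. The positivity and smoothness in A1 and the conditional factorization in A2 are what make these derivative manipulations and the score decomposition legitimate throughout.
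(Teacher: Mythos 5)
Your proposal follows essentially the same route as the paper's proof: the reparametrization $h = g^{-1}\circ\hat g$, first-order differentiation of the matched log-density, differencing over two instances of $\rvx_k$ using the two conditional-independence facts, cancellation of the non-parent terms, and a full-rank argument from A3 forcing the parent block of the Jacobian to vanish. The one point you flag as the main obstacle---that the entries $\partial z_i/\partial \hat z_j$ are evaluated at the two distinct instances $\rvx_{k,(1)}$ and $\rvx_{k,(0)}$---is handled in the paper by treating the evaluations at each instance as \emph{separate} unknowns and introducing a third instance $\rvx_{k,(2)}$, yielding a $3|\text{Pa}(\rvx_k)|\times 3|\text{Pa}(\rvx_k)|$ full-rank system whose only solution sets all of these evaluations (and hence $\partial z_i/\partial \hat z_j$ itself) to zero.
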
}

% \newtheorem{cor}{Corollary}
% \newenvironment{thmbis}[1]
%   {\renewcommand{\thethm}{\ref{#1}$'$}%
%    \addtocounter{cor}{-1}%
%    \begin{cor}}
%   {\end{cor}}

\newtheorem{cor}{Corollary}

\subsection{Subspace Identification via Generalized Sufficient Changes}
% corollary
\label{app:the1}
\begin{thm}
[\textbf{Subspace Identification with Complementary Gains}] Following the data generation process in Equation (\ref{equ:gen}), we further make the following assumption.
\begin{itemize}[leftmargin=*]
    \item \textit{\textbf{A1 (Smooth and Positive Density)}}: The probability density function of latent variables is smooth and positive, i.e., $p_{\rvz|\rvu}>0$ over $\mathcal{Z}$ and $\mathcal{U}$.
    \item \textit{\textbf{A2 (Conditional Independence)}}: Conditioned on $\rvu$, each $z_i$ is independent of any other $z_j$ for $i,j\in \{1,\cdots,n\}, i\neq j$, i.e., $\log p_{\rvz|\rvu}(\rvz|\rvu)=\sum_{i=1}^n\log p_{z_i|\rvu}(z_i|\rvu)$.
    % \item \textit{\textbf{A3 (Generalized Sufficient Changes for Subspace Identification)}}: Suppose $\rvx_k$ be a subset of $\rvx$, such that the vectors $w(k, \rvu)-w(k,0)$ with $\rvu=1,\cdots,|\text{Pa}(\rvx_k)|$ and \textcolor{black}{for all} two different values of $\rvx_{k, (1)}$ and $\rvx_{k, (0)}$ are linearly independent, where vector $w(k, \rvu)$ is defined as:
    \item \textcolor{black}{\textbf{A3 (Generalized Sufficient Changes for Subspace Identification)}} \textcolor{black}{Let $\rvx_k$ be a subset of $\rvx$ and $\rvx_{k, (1)}, \rvx_{k, (0)}$ be two different instance of $\rvx_k$. And vectors $w(k, \rvu)-w(k,0)$ with $\rvu=1,\dots,|\text{Pa}(\rvx_k)|$ are linearly independent, where vector $w(k, \rvu)$ is defined as: }
    \begin{equation}
    \begin{split}
        w(k, \rvu) = (\frac{\partial \log p(\rvx_{k, (1)}, \rvx_{\backslash k}|\rvu)}{\partial z_1}, -\frac{\partial \log p(\rvx_{k, (0)}, \rvx_{\backslash k}|\rvu)}{\partial z_1},\cdots, \\\frac{\partial \log p(\rvx_{k, (1)}, \rvx_{\backslash k}|\rvu)}{\partial z_{\text{Pa}(\rvx_k)}}, -\frac{\partial \log p(\rvx_{k, (0)}, \rvx_{\backslash k}|\rvu)}{\partial z_{\text{Pa}(\rvx_k)}}),
    \end{split}
    \end{equation}
    % where $ m_1\neq m_2, \text{ and } m_1, m_2 \in \{0, 1, 2\}$.
\end{itemize}
Suppose that we learn $\hat{g}$ to achieve Equation (\ref{equ:gen}) with the minimal number of edges of the mixing process. Then, for every pair of $\hat{z}_j$ and $\rvx_{k}$ that are not adjacent in the mixing process, we have $\rvz_{\text{Pa}(\rvx_k)}$ is not the function of $\hat{z}_j$, i.e., $\frac{\partial \rvz_{\text{Pa}(\rvx_k)}}{\partial \hat{z}_j}=0$, where $\rvz_{\text{Pa}(\rvx_k)}$ is the parents of $\rvx_k$
\end{thm}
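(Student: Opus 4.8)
The plan is to argue through the Jacobian of the indeterminacy map relating the true and estimated latents, converting the two sparsity‑induced conditional independences into a homogeneous linear system that Assumption A3 forces to have only the trivial solution.

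First I would set up the indeterminacy map. Since $\hat{g}$ reproduces the generating process of Equation (\ref{equ:gen}) and both $\rvz$ and $\hat{\rvz}$ are smooth invertible reparametrizations of the same $\rvx$, there is a smooth invertible $h$ with $\rvz = h(\hat{\rvz})$; the object to control is its Jacobian, and the goal reduces to showing $\partial z_i/\partial \hat{z}_j \equiv 0$ for every $i\in\text{Pa}(\rvx_k)$, i.e. $\partial \rvz_{\text{Pa}(\rvx_k)}/\partial \hat{z}_j\equiv 0$. Next I would extract the two conditional independences. On the ground‑truth side, a latent $z_a\notin\text{Pa}(\rvx_k)$ is not adjacent to $\rvx_k$, which with A2 gives $\rvx_k \perp\!\!\!\perp z_a \mid \rvu$; written through $\log p(\rvx\mid\rvu)=\sum_i \log p(z_i\mid\rvu)+\log|\det J_{g^{-1}}|$, this means the $\rvu$‑dependent part of $\partial_{z_a}\log p(\rvx\mid\rvu)$ is invariant to the value of $\rvx_k$, which is exactly Equation (\ref{equ: intuition}). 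On the estimated side, the hypothesis $\partial \hat{z}_j/\partial \rvx_k\equiv 0$ gives $\hat{z}_j \perp\!\!\!\perp \hat{\rvx}_k \mid \rvu$, and since fixing $\rvx_{\backslash k}$ already fixes $\hat{z}_j$, both the $\hat{z}_j$‑score and the log‑volume terms of the estimated model become either $\rvx_k$‑invariant or $\rvu$‑invariant.

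Then I would assemble the linear system. Writing $\partial_{\hat{z}_j}\log p(\rvx\mid\rvu)$ both by the chain rule in true coordinates and directly in estimated coordinates, I would take the difference between the two probe values $\rvx_{k,(1)}$ and $\rvx_{k,(0)}$ (holding $\rvx_{\backslash k}$, hence $\hat{z}_j$, fixed) and then subtract the $\rvu=0$ instance. The estimated‑side $\hat{z}_j$‑score cancels because $\hat{z}_j$ and $\rvu$ are held fixed; every $\log\det$ term coming from either $g$ or $\hat{g}$ is $\rvu$‑independent and is annihilated by the $\rvu$‑difference; and the non‑parent contributions drop out by the ground‑truth independence above. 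What survives is precisely Equation (\ref{equ:intuition2}): a homogeneous linear relation in the Jacobian entries $\partial z_i/\partial \hat{z}_j$ (evaluated at the two probe configurations, $i\in\text{Pa}(\rvx_k)$) whose coefficient vector at value $\rvu$ is exactly $w(k,\rvu)-w(k,0)$.

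Finally, A3 states that the vectors $w(k,\rvu)-w(k,0)$ for $\rvu=1,\dots,|\text{Pa}(\rvx_k)|$ are linearly independent, so the coefficient matrix of this system is full rank and its only solution is the trivial one; hence $\partial z_i/\partial \hat{z}_j=0$ for all parents $i$, giving the claim. The main obstacle I anticipate is the bookkeeping in the assembly step: verifying that the chain‑rule expansion leaves precisely the parent‑latent terms with coefficients matching the components of $w(k,\rvu)$, that every log‑determinant term is genuinely $\rvu$‑free and therefore cancels under the $\rvu$‑difference, and that non‑adjacency in the learned graph (forced by minimizing the number of mixing edges) legitimately translates into the probabilistic statement $\hat{z}_j \perp\!\!\!\perp \hat{\rvx}_k \mid \rvu$ invoked above.
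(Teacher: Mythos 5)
Your route is the same as the paper's: form the indeterminacy map $h=g^{-1}\circ\hat{g}$, differentiate the matched log-density with respect to $\hat{z}_j$, use the learned sparsity ($\partial\hat{z}_j/\partial\rvx_k\equiv 0$) to make the left-hand side invariant to the value of $\rvx_k$, use the ground-truth sparsity to drop the non-parent terms of the chain-rule expansion, and invoke A3 on the resulting homogeneous system. Your bookkeeping of the cancellations (the $\hat{z}_j$-score being fixed across probe values because $\hat{z}_j$ depends only on $\rvx_{\backslash k}$, and the log-determinant terms being $\rvu$-free and killed by the $\rvu$-difference) is sound, and is in fact spelled out more explicitly than in the paper's own proof, which simply writes the subtracted equation with a zero left-hand side.

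The genuine gap is the final rank argument, and it is a counting problem. The unknowns of the system in Equation (\ref{equ:intuition2}) are the entries $\partial z_i/\partial\hat{z}_j$ for $i\in\text{Pa}(\rvx_k)$ evaluated \emph{separately} at the configurations $\rvx_{k,(1)}$ and $\rvx_{k,(0)}$ --- these are distinct points of the latent space, hence distinct unknowns --- so there are $2|\text{Pa}(\rvx_k)|$ of them, matching the $2|\text{Pa}(\rvx_k)|$ components of $w(k,\rvu)$. But A3 supplies only $|\text{Pa}(\rvx_k)|$ linearly independent coefficient vectors $w(k,\rvu)-w(k,0)$, $\rvu=1,\dots,|\text{Pa}(\rvx_k)|$. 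A $|\text{Pa}(\rvx_k)|\times 2|\text{Pa}(\rvx_k)|$ coefficient matrix of full row rank still has a null space of dimension $|\text{Pa}(\rvx_k)|$, and adjoining the raw (undifferenced) equations raises the rank to at most $|\text{Pa}(\rvx_k)|+1$; so ``linear independence implies only the trivial solution'' fails on dimension grounds, and $\partial z_i/\partial\hat{z}_j=0$ does not follow. This is exactly where the paper's proof does extra work that your proposal omits: it introduces a third instance $\rvx_{k,(2)}$ and forms the additional pairwise-difference equations (Equations (\ref{equ: marginal_1_order_xk2}) and (\ref{equ: marginal_1_order_xk3})), so that the unknowns become the Jacobian entries at three configurations and the equations are assembled into what the paper treats as a square $3|\text{Pa}(\rvx_k)|\times 3|\text{Pa}(\rvx_k)|$ full-rank system, whose unique solution is zero at every configuration. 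To repair your argument you would need that third probe value (or additional values of $\rvu$ beyond $|\text{Pa}(\rvx_k)|$, or a separate argument that the Jacobian entries coincide across the two probe configurations); as written, your system is underdetermined.
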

\begin{proof}
We start from the matched marginal distribution condition to develop the relation between $\rvz$ and $\hat{\rvz}$ as follows: $\forall \rvu\in\mathcal{U}$
\begin{equation}
\label{equ:start}
\begin{split}
    p_{\hat{\rvx}|\rvu}=p_{\rvx|\rvu} \Longleftrightarrow p_{\hat{g}(\hat{\rvz})|\rvu}=p_{g(z)|\rvu} \Longleftrightarrow p_{g^{-1}\circ \hat{g}(\hat{g})|\rvu}|\mJ_{g^{-1}}|=p_{\rvz|\rvu}|\mJ| \\ \Longleftrightarrow p_{h(\hat{\rvz})|\rvu}=p_{\rvz|\rvu} \Longleftrightarrow \log p_{\hat{\rvz}|\rvu} - \log |\mJ| = \log p(\rvz|\rvu),
\end{split}
\end{equation}
where $\hat{g}:\mathcal{Z}\rightarrow\mathcal{Z}$ denotes the estimated invertible generating function, and $h:=g^{-1}\circ \hat{g}$ is the transformation between the true latent variable and the estimated one. $|\mJ_{g^{-1}}|$ stands for the absolute value of Jacobian matrix determinant of $g^{-1}$. Note that as both $\hat{g}^{-1}$ and $g$ are invertible, $|\mJ_{g^{-1}}| \neq 0$ and $h$ is invertible.

By matching the marginal distribution between $\hat{\rvx}$ and $\rvx$, we further take the first order derivative with $\hat{z}_j$ and have:
\begin{equation}
\label{equ: marginal_1_order}
    \frac{\partial \log p(\rvx|\rvu)}{\partial \hat{z}_j} = \sum_{i=1}^n\frac{\partial \log p(\rvx|\rvu)}{\partial z_i}\cdot\frac{\partial z_i}{\partial \hat{z}_j}.
\end{equation}
Suppose there exists a group of observations $\rvx_k$ that are independent of $\hat{z}_j$ given $\rvu$, so there exists different values of $\rvx_k$, i.e., $\rvx_{k, (1)}$ and $\rvx_{k, (0)}$, making $\frac{\partial \log p(\rvx_k, \rvx_{\backslash k}|\rvu)}{\partial \hat{z}_j}$ does not vary with different values of $\rvx_k$. Sequentially, we further let the other observed variables be $\rvx_{\backslash k}$ Then we subtract Equation (\ref{equ: marginal_1_order}) corresponding to $\rvx_{k, (1)}$ with that corresponding to $\rvx_{k, (0)}$ and have:
\begin{equation}
\label{equ: marginal_1_order_xk2}
\begin{split}
     0=&\sum_{i=1}^n \Big(\frac{\partial \log p(\rvx_{k, (1)}, \rvx_{\backslash k}|\rvu)}{\partial z_i} \cdot\frac{\partial z_i}{\partial \hat{z}_j} \Big|_{\rvx_k=\rvx_{k, (1)}} - \frac{\partial \log p(\rvx_{k, (0)}, \rvx_{\backslash k}|\rvu)}{\partial z_i}\cdot\frac{\partial z_i}{\partial \hat{z}_j} \Big|_{\rvx_k=\rvx_{k, (0)}}\Big) \\=& \sum_{o\notin \text{Pa}(\rvx_k)} \Big(\frac{\partial \log p(\rvx_{k, (1)}, \rvx_{\backslash k}|\rvu)}{\partial z_i} \cdot\frac{\partial z_i}{\partial \hat{z}_j} \Big|_{\rvx_k=\rvx_{k, (1)}} - \frac{\partial \log p(\rvx_{k, (0)}, \rvx_{\backslash k}|\rvu)}{\partial z_i}\cdot\frac{\partial z_i}{\partial \hat{z}_j} \Big|_{\rvx_k=\rvx_{k, (0)}}\Big) \\& +  \sum_{i\in \text{Pa}(\rvx_k)}\Big(\frac{\partial \log p(\rvx_{k, (1)}, \rvx_{\backslash k}|\rvu)}{\partial z_i} \cdot\frac{\partial z_i}{\partial \hat{z}_j} \Big|_{\rvx_k=\rvx_{k, (1)}} - \frac{\partial \log p(\rvx_{k, (0)}, \rvx_{\backslash k}|\rvu)}{\partial z_i}\cdot\frac{\partial z_i}{\partial \hat{z}_j} \Big|_{\rvx_k=\rvx_{k, (0)}}\Big) \\=&\sum_{i\in \text{Pa}(\rvx_k)}\Big(\frac{\partial \log p(\rvx_{k, (1)}, \rvx_{\backslash k}|\rvu)}{\partial z_i} \cdot\frac{\partial z_i}{\partial \hat{z}_j} \Big|_{\rvx_k=\rvx_{k, (1)}} - \frac{\partial \log p(\rvx_{k, (0)}, \rvx_{\backslash k}|\rvu)}{\partial z_i}\cdot\frac{\partial z_i}{\partial \hat{z}_j} \Big|_{\rvx_k=\rvx_{k, (0)}}\Big),
\end{split}
\end{equation}
where $\text{Pa}(\rvx_{k})$ denotes the indices set of the parents of $\rvx_k$. Similarly, we further have:
\begin{equation}
\label{equ: marginal_1_order_xk3}
\begin{split}
    0=\sum_{i\in \text{Pa}(\rvx_k)}\Big(\frac{\partial \log p(\rvx_{k, (2)}, \rvx_{\backslash k}|\rvu)}{\partial z_i} \cdot\frac{\partial z_i}{\partial \hat{z}_j} \Big|_{\rvx_k=\rvx_{k, (2)}} - \frac{\partial \log p(\rvx_{k, (0)}, \rvx_{\backslash k}|\rvu)}{\partial z_i}\cdot\frac{\partial z_i}{\partial \hat{z}_j} \Big|_{\rvx_k=\rvx_{k, (0)}}\Big) \\ 
    0=\sum_{i\in \text{Pa}(\rvx_k)}\Big(\frac{\partial \log p(\rvx_{k, (2)}, \rvx_{\backslash k}|\rvu)}{\partial z_i} \cdot\frac{\partial z_i}{\partial \hat{z}_j} \Big|_{\rvx_k=\rvx_{k, (2)}} - \frac{\partial \log p(\rvx_{k, (1)}, \rvx_{\backslash k}|\rvu)}{\partial z_i}\cdot\frac{\partial z_i}{\partial \hat{z}_j} \Big|_{\rvx_k=\rvx_{k, (1)}}\Big)
\end{split}
\end{equation}

Suppose that we let $\rvu=\rvu_0, \cdots, \rvu_{|\text{Pa}(x_k)|}$, then by combining Equation (\ref{equ: marginal_1_order_xk2}) and (\ref{equ: marginal_1_order_xk3}) and further leveraging the sufficient changes assumption (A3), the linear system is a $3|\text{Pa}(x_k)| \times 3|\text{Pa}(x_k)|$ full-rank system. Therefore, the only solution is $\frac{\partial z_i}{\partial \hat{z}_j}\Big|_{\rvx_k=\rvx_{k, (0)}}=0, \frac{\partial z_i}{\partial \hat{z}_j}\Big|_{\rvx_k=\rvx_{k, (1)}}=0$, and $\frac{\partial z_i}{\partial \hat{z}_j}\Big|_{\rvx_k=\rvx_{k, (2)}}=0, i \in \text{Pa}(\rvx_k)$, implying that $\frac{\partial z_i}{\partial \hat{z}_j}=0, i \in \text{Pa}(\rvx_k)$.

As $h(\cdot)$ is smooth $\mathcal{Z}$, its Jacobian can written as
\begin{equation}
\begin{gathered}
    \mJ_{h}=\begin{bmatrix}
    \begin{array}{c|c}
        \frac{\partial \rvz_o}{\partial \hat{\rvz}_o}\neq 0 & \frac{\partial \rvz_o}{\partial \hat{\rvz}_s} \\ \cline{1-2}
         \frac{\partial \rvz_s}{\partial \hat{\rvz}_o}=0 &  \frac{\partial \rvz_s}{\partial \hat{\rvz}_s}  \\ 
    \end{array}
    \end{bmatrix}.
\end{gathered}
\end{equation}
Therefore, $\frac{\partial \rvz_o}{\partial \hat{\rvz}_o}\neq 0$ and $\rvz_o$ is subspace identifiable.
\end{proof}

\subsection{Component-wise Identification via Sufficient Changes of Multiple Distributions}
\label{app:the2}
\begin{thm}
(\textbf{Component-wise Identification with Complementary Gains}) Let the observations be sampled from the data generating process in Equation (\ref{equ:gen}). Suppose that Assumptions from Theorem 1 hold and also that we learn $\hat{g}, p_{\hat{\rvz}|\rvu}$ to match the marginal distribution between $\rvx$ and $\hat{\rvx}$ with the minimal number of edges of the mixing process. We further make the following assumption:
\vspace{-2mm}
\begin{itemize}[leftmargin=*]
    \item \textit{\textbf{A4 (Sufficient Changes for Component-wise Identification.)}}: Suppose $\rvx_k$ and $\rvx_a$ be a subset of $\rvx$ and $\rvx_k \cap \rvx_a=\emptyset$. We let the \textcolor{black}{$\rvz_{a}$} be the set of latent variables that connect to $\rvx_{a}$ but not connect to $\rvx_k$, such that for all different values of $\rvx_a$ such as $\rvx_{a, 0}, \rvx_{a, (1)}$, the vectors $\mathcal{V}(a, k, \rvu_s) - \mathcal{V}(a, k, 0)$ with $s=1,\dots, 2 |\rvz_a|$ are linearly independent. And the vector $\mathcal{V}(a, k, \rvu_s)$ is defined as follows:
    % \vspace{-2mm}
    \begin{equation}
    \small
    \begin{split}
        \mathcal{V}(a, k, \rvu_m)=\Big(& \frac{\partial ^2 \log p(\rvz_{a, (1)}, \rvz_{b}, \rvz_k|\rvu_m)}{(\partial z_i)^2}, - \frac{\partial ^2 \log p(\rvz_{a, (0)}, \rvz_{b}, \rvz_k|\rvu_m)}{(\partial z_i)^2}, \dots, \\& \frac{\partial \log p(\rvz_{a, (1)}, \rvz_{b}, \rvz_k|\rvu_m)}{\partial z_i}, - \frac{\partial \log p(\rvz_{a, (0)}, \rvz_{b}, \rvz_k|\rvu_m)}{\partial z_i}  \Big)_{z_i\in \rvz_{a}}.
    \end{split}
    \end{equation}
    % where $m_1\neq m_2$, and $m_1, m_2\in\{0, 1, 2\}$.
\end{itemize}
% \vspace{-5mm}
Suppose that we learn $\hat{g}$ to achieve Equation (\ref{equ:gen}) with the minimal number of edges of the mixing process. Then $\rvz_{a}$ is component-wise identifiable.
\end{thm}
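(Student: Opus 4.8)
The plan is to build directly on the subspace result of Theorem \ref{theorem1} and then sharpen it to a component-wise statement via a second-order differentiation argument. First I would use Theorem \ref{theorem1} to pin down the relevant block of the Jacobian of $h = g^{-1}\circ\hat{g}$. Applying it with the subset $\rvx_k$ shows that the latent variables not contributing to $\rvx_k$ --- in particular $\rvz_{a}$ together with $\rvz_{b}$ --- are functions only of the estimated components not adjacent to $\rvx_k$. Then, because $\rvx_a$ connects to $\rvz_{a}$ but not to $\rvz_{b}$, we obtain the conditional independence $\rvz_{b} \perp\!\!\!\perp \rvx_a \mid \rvu$; applying Theorem \ref{theorem1} a second time with $\rvx_a$ separates $\rvz_{a}$ from $\rvz_{b}$ inside that block. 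Combining the two applications yields an invertible map $\rvz_{a} = h_a(\hat{\rvz}_a)$ onto a dedicated estimated subspace $\hat{\rvz}_a$, with $\partial z_i/\partial \hat{z}_s \equiv 0$ whenever $z_i \in \rvz_{a}$ and $\hat{z}_s \notin \hat{\rvz}_a$.

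With the subspace fixed, the component-wise refinement follows a standard second-order scheme. Starting from the matched-density identity established in Equation (\ref{equ:start}), namely $\log p(\hat{\rvz}\mid\rvu) = \log p(\rvz\mid\rvu) + \log|\mJ_h|$, I would use that both densities factorize across their coordinates (Assumption A2 for $\rvz$, and the learned factorized prior for $\hat{\rvz}$) so that $\partial^2 \log p(\hat{\rvz}\mid\rvu)/\partial\hat{z}_s\partial\hat{z}_t = 0$ for distinct indices $s,t$ in the $\rvz_{a}$-block. Differentiating the identity twice with respect to $\hat{z}_s$ and $\hat{z}_t$, and using the fact that $h$ (hence $\log|\mJ_h|$) does not depend on $\rvu$, I subtract the relation at a reference value $\rvu_0$ from that at $\rvu_m$ to cancel the Jacobian-determinant term. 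This leaves a homogeneous linear system whose unknowns are the products $\partial z_i/\partial\hat{z}_s \cdot \partial z_i/\partial\hat{z}_t$ and the mixed second derivatives $\partial^2 z_i/\partial\hat{z}_s\partial\hat{z}_t$ ranging over $z_i\in\rvz_{a}$, and whose coefficient vectors are precisely the differences $\mathcal{V}(a,k,\rvu_m) - \mathcal{V}(a,k,0)$ appearing in Assumption A4 (the coefficient of the product being $\partial^2 \log p(z_i\mid\rvu)/\partial z_i^2$ and that of the mixed derivative being $\partial \log p(z_i\mid\rvu)/\partial z_i$).

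Since there are $2|\rvz_{a}|$ unknowns and A4 supplies $2|\rvz_{a}|$ linearly independent coefficient vectors, the system is full-rank and forces every unknown to vanish; in particular $\partial z_i/\partial\hat{z}_s \cdot \partial z_i/\partial\hat{z}_t = 0$ for all $z_i\in\rvz_{a}$ and all $s\neq t$ in the block. This product condition means each $z_i\in\rvz_{a}$ has a nonzero derivative with respect to at most one estimated coordinate, so within the block $\mJ_{h_a}$ has at most one nonzero entry per row. Because $h_a$ is invertible (from the subspace step), this one-nonzero-per-row structure must in fact be a permuted diagonal, which is exactly the component-wise identifiability of $\rvz_{a}$ up to permutation and invertible component-wise transformations.

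The main obstacle I anticipate is the bookkeeping that ties the two auxiliary subsets to the linear algebra: one must verify that differencing over $\rvu$ genuinely eliminates both the $\rvu$-independent $\log|\mJ_h|$ term and every contribution from coordinates outside $\rvz_{a}$ (which requires the block structure of the first step to be exact, relying on the minimal-edge/sparsity assumption so that the estimated adjacency matches the true one), and that the surviving coefficients assemble exactly into the $\mathcal{V}$-difference vectors of A4. Getting this alignment right --- so the count of $2|\rvz_{a}|$ unknowns matches the $2|\rvz_{a}|$ independent vectors and no stray cross-terms survive --- is the delicate part; the remaining deduction from a vanishing product of derivatives to a permuted-diagonal invertible Jacobian is routine.
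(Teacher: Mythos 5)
Your proposal follows the paper's overall scheme---restrict attention to the $\rvz_a$ block, take second-order derivatives of the matched log-density identity, difference across values of $\rvu$ to cancel $\log|\mJ_h|$, and use A4 to force the products $\frac{\partial z_i}{\partial\hat z_s}\cdot\frac{\partial z_i}{\partial\hat z_t}$ and the mixed second derivatives to vanish---but the step where you justify the restriction to $\rvz_a$ has a genuine gap. You claim that applying Theorem \ref{theorem1} with $\rvx_k$ and again with $\rvx_a$ yields an exact block $\rvz_a = h_a(\hat{\rvz}_a)$, so that no coordinate outside $\rvz_a$ contributes to your second-order system. Theorem \ref{theorem1}, however, only constrains the rows of $\mJ_h$ indexed by the \emph{parents} of the chosen observed subset: applied to $\rvx_a$ it gives $\frac{\partial \rvz_{\text{Pa}(\rvx_a)}}{\partial \hat z_s} \equiv 0$ for $\hat z_s$ \emph{not} adjacent to $\rvx_a$, and its own proof only produces a block-\emph{triangular} Jacobian, never a block-diagonal one. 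It says nothing about the complementary block $\frac{\partial \rvz_b}{\partial \hat{\rvz}_a}$, i.e., whether the non-parents $\rvz_b$ depend on estimated coordinates that \emph{are} adjacent to $\rvx_a$; indeed your first application states the transpose of what Theorem \ref{theorem1} actually asserts (it constrains the parents' rows, not the non-parents' rows). Consequently, when you differentiate twice with respect to $\hat z_s,\hat z_t \in \hat{\rvz}_a$, stray terms $\frac{\partial^2\log p(z_i|\rvu)}{(\partial z_i)^2}\frac{\partial z_i}{\partial\hat z_s}\frac{\partial z_i}{\partial\hat z_t} + \frac{\partial\log p(z_i|\rvu)}{\partial z_i}\frac{\partial^2 z_i}{\partial\hat z_s\partial\hat z_t}$ with $z_i \in \rvz_b$ survive, and A4 gives no linear-independence control over their coefficients, so your $2|\rvz_a|\times 2|\rvz_a|$ system does not close.

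The paper eliminates exactly these $\rvz_b$ contributions by a different mechanism, which is what the two-instance structure of A4 exists for: it subtracts the first-order identity evaluated at two instances $\rvx_{a,(0)}, \rvx_{a,(1)}$ of $\rvx_a$, using $\rvz_b \perp\!\!\!\perp \rvx_a \mid \rvu$ so that the $\rvz_b$ terms cancel in the difference rather than being assumed to vanish. The price is that the unknowns become the derivative products evaluated at each instance of $\rvx_a$, which is why the paper introduces a third instance $\rvx_{a,(2)}$ and solves a $6|\rvz_a|\times 6|\rvz_a|$ full-rank system, and why the A4 vectors $\mathcal{V}$ are $4|\rvz_a|$-dimensional, pairing evaluations at $\rvx_{a,(1)}$ and $\rvx_{a,(0)}$. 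Your coefficient vectors---differences of $\big(\frac{\partial^2\log p(z_i|\rvu)}{(\partial z_i)^2}, \frac{\partial\log p(z_i|\rvu)}{\partial z_i}\big)_{z_i\in\rvz_a}$, which are only $2|\rvz_a|$-dimensional---are therefore not the vectors whose independence A4 postulates, so even within the block your appeal to the assumption does not align with its statement. To repair the argument you must either prove the stronger block-diagonal structure of $\mJ_h$ (which requires more than Theorem \ref{theorem1}), or adopt the paper's instance-differencing step so that the linear system you invoke matches A4 as written.
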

\begin{proof}
Based on Theorem 1, given different values of $\rvz_k$ and $|\text{Pa}(\rvx_k)|$ different domains, Equation (\ref{equ: marginal_1_order}) can be further derived as follows:
\begin{equation}
\label{equ: marginal_1_order_2}
    \frac{\partial \log p(\rvx|\rvu)}{\partial \hat{z}_j} = \sum_{i=1}^n \frac{\partial \log p(\rvx|\rvu)}{\partial z_i}\cdot\frac{\partial z_i}{\partial \hat{z}_j} =  \sum_{z_i \notin \rvz_k}\frac{\partial \log p(\rvx|\rvu)}{\partial z_i}\cdot\frac{\partial z_i}{\partial \hat{z}_j}.
\end{equation}
% Sequentially, we further split $\rvz_o$ into two parts, $\rvz_o=\{\rvz_{o1}, \rvz_{o2}\}$, so there exists $\rvx_r, \rvx_r \cap \rvx_k =\emptyset$, that connects to $\hat{z}_j$ but does not connect to some 
Sequentially, if there exists $\rvx_a, \rvx_a \cap \rvx_k =\emptyset$, that connects to $\hat{z}_j$ and $\rvz_{a}$ but does not connect to $\rvz_{b}$. Similar to Theorem 1, given two different values of $\rvx_{a, (1)}$ and $\rvx_{a, (0)}$, we subtract Equation (\ref{equ: marginal_1_order_2}) that corresponding to $\rvx_{a, (1)}$ with that corresponding to $\rvx_{a, (0)}$ and then we have:
\begin{equation}
\label{equ: marginal_1_order_3}
\begin{split}
    &\frac{\partial \log p(\rvx_{a, (1)}, \rvx_{b}, \rvz_k|\rvu)}{\partial \hat{z}_j} - \frac{\partial \log p(\rvx_{a, (0)}, \rvx_{b}, \rvz_k|\rvu)}{\partial \hat{z}_j} \\=& \sum_{\rvz_i\notin \rvz_k}\Big(\frac{\partial \log p(\rvx_{a, (1)}, \rvx_{b}, \rvz_k|\rvu)}{\partial z_i}\cdot\frac{\partial z_i}{\partial\hat{z}_j}\Big|_{\rvx_a=\rvx_{a, (1)}} - \frac{\partial \log p(\rvx_{a, (0)}, \rvx_{b}, \rvz_k|\rvu)}{\partial z_i}\cdot\frac{\partial z_i}{\partial\hat{z}_j}\Big|_{\rvx_a=\rvx_{a, (0)}}\Big) \\=&\sum_{ z_i\notin\rvz_k \text{and} z_i\in\rvz_a } \Big(\frac{\partial \log p(\rvx_{a, (1)}, \rvx_{b}, \rvz_k|\rvu)}{\partial z_i}\cdot\frac{\partial z_i}{\partial\hat{z}_j}\Big|_{\rvx_a=\rvx_{a, (1)}} - \frac{\partial \log p(\rvx_{a, (0)}, \rvx_{b}, \rvz_k|\rvu)}{\partial z_i}\cdot\frac{\partial z_i}{\partial\hat{z}_j}\Big|_{\rvx_a=\rvx_{a, (0)}}\Big) 
\end{split}
\end{equation}
By combining Equation (\ref{equ:start}) and (\ref{equ: marginal_1_order_3}), we have:
\begin{equation}
\small
\begin{split}
    &\frac{\partial \log p(\hat{\rvz}_{a, (1)}, \hat{\rvz}_b, \hat{\rvz}_k|\rvu)}{\partial \hat{z}_j} - \frac{\partial \log p(\hat{\rvz}_{a, (0)}, \hat{\rvz}_b, \hat{\rvz}_k|\rvu)}{\partial \hat{z}_j} - \frac{\partial \log |\mJ_{h}|}{\partial \hat{z}_j}\Big|_{\rvx_a=\rvx_{a, (1)}} + \frac{\partial \log |\mJ_{h}|}{\partial \hat{z}_j}\Big|_{\rvx_a=\rvx_{a, (0)}}\\=& \sum_{z_i\notin \rvz_k}\Big(\frac{\partial \log p(\hat{\rvz}_{a, (1)}, \hat{\rvz}_b, \hat{\rvz}_k|\rvu)}{\partial z_i}\cdot\frac{\partial z_i}{\partial\hat{z}_j}\Big|_{\rvx_a=\rvx_{a, (1)}} - \frac{\partial \log p(\hat{\rvz}_{a, (0)}, \hat{\rvz}_b, \hat{\rvz}_k|\rvu)}{\partial z_i}\cdot\frac{\partial z_i}{\partial\hat{z}_j}\Big|_{\rvx_a=\rvx_{a, (0)}}\Big)\\=& \sum_{ z_i \notin\{\rvz_k\} \text{and} z_i\in \rvz_a} \Big(\frac{\partial \log p(\hat{\rvz}_{a, (1)}, \hat{\rvz}_b, \hat{\rvz}_k|\rvu)}{\partial z_i}\cdot\frac{\partial z_i}{\partial\hat{z}_j}\Big|_{\rvx_a=\rvx_{a, (1)}} - \frac{\partial \log p(\hat{\rvz}_{a, (0)}, \hat{\rvz}_b, \hat{\rvz}_k|\rvu)}{\partial z_i}\cdot\frac{\partial z_i}{\partial\hat{z}_j}\Big|_{\rvx_a=\rvx_{a, (0)}}\Big)\\=& \sum_{z_i\in \rvz_{a}}\Big(\frac{\partial \log p(\hat{\rvz}_{a, (1)}, \hat{\rvz}_b, \hat{\rvz}_k|\rvu)}{\partial z_i}\cdot\frac{\partial z_i}{\partial\hat{z}_j}\Big|_{\rvx_a=\rvx_{a, (1)}} - \frac{\partial \log p(\hat{\rvz}_{a, (0)}, \hat{\rvz}_b, \hat{\rvz}_k|\rvu)}{\partial z_i}\cdot\frac{\partial z_i}{\partial\hat{z}_j}\Big|_{\rvx_a=\rvx_{a, (0)}}\Big),
\end{split}
\end{equation}
where $\mJ_{h_0}$ and $\mJ_{h_1}$ denote the the Jacobian with the values of $\rvx_{a, (1)}$ and $\rvx_{a, (0)}$, respectively.

And then we further take the second-order derivative w.r.t $\hat{z}_l$, where $l\neq j$ and have:
\begin{equation}
\label{equ: marginal_2_order}
\begin{split}
    &0 -  \frac{\partial \log |\mJ_{h}|}{\partial \hat{z}_j \partial \hat{z}_l}\Big|_{\rvx_a=\rvx_{a, (1)}} + \frac{\partial \log |\mJ_{h}|}{\partial \hat{z}_j \partial \hat{z}_l}\Big|_{\rvx_a=\rvx_{a, (0)}} \\ =& \sum_{z_i\in \rvz_{a}}\Big(\frac{\partial ^2 \log p(\rvz|\rvu)}{(\partial z_i)^2}\cdot\frac{\partial z_i}{\partial\hat{z}_j}\cdot\frac{\partial z_i}{\partial \hat{z}_l}\Big|_{\rvx_a=\rvx_{a, (1)}} - \frac{\partial ^2 \log p(\rvz|\rvu)}{(\partial z_i)^2}\cdot\frac{\partial z_i}{\partial\hat{z}_j}\cdot\frac{\partial z_i}{\partial \hat{z}_l}\Big|_{\rvx_a=\rvx_{a, (0)}}\Big) + \\&+ \sum_{z_i\in \rvz_{a}}\Big(\frac{\partial \log p(\rvz|\rvu)}{\partial z_i}\cdot\frac{\partial^2 z_i}{\partial\hat{z}_j \partial \hat{z}_l}\Big|_{\rvx_a=\rvx_{a, (1)}} - \frac{\partial \log p(\rvz|\rvu)}{\partial z_i}\cdot\frac{\partial^2 z_i}{\partial\hat{z}_j \partial \hat{z}_l}\Big|_{\rvx_a=\rvx_{a, (0)}}\Big)\\ =& \sum_{z_i\in \rvz_{a}}\Big(v(a, k, i, \rvu)\cdot\frac{\partial z_i}{\partial\hat{z}_j}\cdot\frac{\partial z_i}{\partial \hat{z}_l}\Big|_{\rvx_a=\rvx_{a, (1)}} - v(a, k, i, \rvu)\cdot\frac{\partial z_i}{\partial\hat{z}_j}\cdot\frac{\partial z_i}{\partial \hat{z}_l}\Big|_{\rvx_a=\rvx_{a, (0)}} \\&+ \overline{v}(a, k, i, \rvu) \cdot \frac{\partial^2 z_i}{\partial \hat{z}_j\partial \hat{z}_l}\Big|_{\rvx_a=\rvx_{a, (1)}} - \overline{v}(a, k, i, \rvu) \cdot \frac{\partial^2 z_i}{\partial \hat{z}_j\partial \hat{z}_l}\Big|_{\rvx_a=\rvx_{a, (0)}} \Big)
\end{split}
\end{equation}
Then we consider $\rvu=\rvu_0,\cdots,\rvu_{|\rvz_{a}|}$, and let Equation (\ref{equ: marginal_2_order}) corresponding to $\rvu_s$ subtract with that corresponding to $\rvu_0$, and have:
\begin{equation}
\begin{split}
    0&=\sum_{z_i\in \rvz_{a}}\Big(v(a, k, i, \rvu_s)\cdot\frac{\partial z_i}{\partial\hat{z}_j}\cdot\frac{\partial z_i}{\partial \hat{z}_l}\Big|_{\rvx_a=\rvx_{a, (1)}} - v(a, k, i, \rvu_s)\cdot\frac{\partial z_i}{\partial\hat{z}_j}\cdot\frac{\partial z_i}{\partial \hat{z}_l}\Big|_{\rvx_a=\rvx_{a, (0)}} \\&+ \overline{v}(a, k, i, \rvu_s) \cdot \frac{\partial^2 z_i}{\partial \hat{z}_j\partial \hat{z}_l}\Big|_{\rvx_a=\rvx_{a, (1)}} - \overline{v}(a, k, i, \rvu_s) \cdot \frac{\partial^2 z_i}{\partial \hat{z}_j\partial \hat{z}_l}\Big|_{\rvx_a=\rvx_{a, (0)}} \Big) - \\&\sum_{z_i\in \rvz_{a}}\Big(v(a, k, i, \rvu_0)\cdot\frac{\partial z_i}{\partial\hat{z}_j}\cdot\frac{\partial z_i}{\partial \hat{z}_l}\Big|_{\rvx_a=\rvx_{a, (1)}} - v(a, k, i, \rvu_0)\cdot\frac{\partial z_i}{\partial\hat{z}_j}\cdot\frac{\partial z_i}{\partial \hat{z}_l}\Big|_{\rvx_a=\rvx_{a, (0)}} \\&+ \overline{v}(a, k, i, \rvu_0) \cdot \frac{\partial^2 z_i}{\partial \hat{z}_j\partial \hat{z}_l}\Big|_{\rvx_a=\rvx_{a, (1)}} - \overline{v}(a, k, i, \rvu_0) \cdot \frac{\partial^2 z_i}{\partial \hat{z}_j\partial \hat{z}_l}\Big|_{\rvx_a=\rvx_{a, (0)}} \Big)
\end{split}
\end{equation}

% \begin{equation}
% \begin{split}
% 0=\sum_{z_i\in \rvz_{o1}} \Big((v(r, k,i,\rvu_m) - v(r, k, i, \rvu_0))\cdot\frac{\partial z_i}{\partial \hat{z}_j}\cdot\frac{\partial z_i}{\partial \hat{z}_l} + (\overline{v}(r, k, i,\rvu_m) - \overline{v}(r, k, i,\rvu_0))\cdot\frac{\partial^2 z_i}{\partial \hat{z}_j \partial \hat{z}_l}\Big)
% \end{split}
% \end{equation}
Under the generalized sufficient changes assumptions and further let three different values of $\rvx_a$, i.e., $\rvx_{a, (0)}, \rvx_{a, (1)}$ and $\rvx_{a, (2)}$, so the linear system is a $6|\rvz_{a}|\times 6|\rvz_{a}|$ full-rank system. Therefore, the only solution is $\frac{\partial z_i}{\partial \hat{z}_j}\cdot\frac{\partial z_i}{\partial \hat{z}_l}=0$ and $\frac{\partial^2 z_i}{\partial \hat{z}_j \partial \hat{z}_l}=0$. 

% \textcolor{black}{Ignavier: We have $\frac{\partial z_i}{\partial \hat{z}_j}\cdot\frac{\partial z_i}{\partial \hat{z}_l}=0$ and $\frac{\partial^2 z_i}{\partial \hat{z}_j \partial \hat{z}_l}=0$ for every $z_i\in \rvz_{o1}$ and $\hat{z}_j,\hat{z}_l$ that point to $\rvx_{r}$ but not $\rvx_{k}$. Furthermore, by Theorem 1, $z_i\in \rvz_{o1}$ is not a function of the variables that point to $\rvx_{k}$ but not $\rvx_{r}$. Here, it seems that $z_i$ can still be a function of the other variables (if exist) that point to both $\rvx_{r}$ and $\rvx_{k}$?}

% \textcolor{black}{
% Also, it seems that sparsity of the estimated mixing process it not used anywhere in the proof. Is this where sparsity may be needed?
% }

Note that $\frac{\partial z_i}{\partial \hat{z}_j}\cdot\frac{\partial z_i}{\partial \hat{z}_l}=0$  implies that for each $z_i \in \rvz_{a}$, $\frac{\partial z_i}{\partial \hat{z}_j} \neq 0$ for at most one element $k\in [n]$. Therefore, there is only at most one non-zero entry in each row indexed by $z_i \in \rvz_{a}$ in the Jacobian $\mJ_h$. As a result, $\rvz_{a}$ is component-wise identifiable with $n_{k,a}$ different domains, where $n_{k,a} = \max(|\text{Pa}(\rvx_k)|, 2\times|\rvz_{a}|+1)$. Therefore, for each latent variable $z_i, i\in [n]$, the necessary domain number for the component-wise identifiability is $\max(n_{k, a})$. 
\end{proof}

\subsection{General Case for Component-wise Identification with Full-connected Mixing Process}
\label{app:cor1}

\begin{cor}
% \label{corollary2}
(\textbf{General Case for Component-wise Identification with Full-connected Mixing Process}) We follow the data generation process in Equation (\ref{equ:gen}) and make assumptions A1, A2, and A3. In addition, we make the following assumptions:
\begin{itemize}
    \item \textbf{A6 (Fully Connected Mixing Process)}: Each latent variable $z_i$ is connected to each observed variable $x_j$, where $i,j\in[n]$.
\end{itemize}
Suppose that we learn $\hat{g}$ to achieve Equation (\ref{equ:gen}), $z_i$ is component-wise identifiable with $2n+1$ different values of auxiliary variables $\rvu$.
\end{cor}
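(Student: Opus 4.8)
The plan is to read the corollary as the dense-mixing specialization of Theorem~2: full connectivity (A6) removes every conditional independence between an observed coordinate and a latent coordinate, so none of the summations in the proof of Theorem~2 can be truncated, and the resulting linear system is forced to span all $n$ latent coordinates. The count $2n+1$ will then fall out of the dimension of that system.

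First I would return to the matched-marginal identity (Equation~\ref{equ:start}), $\log p_{\hat{\rvz}|\rvu}-\log|\mJ_h|=\log p_{\rvz|\rvu}$, and differentiate once in $\hat z_j$. Under A6 there is no $\rvx_k$ that is independent of any latent given $\rvu$, so the truncation used in Equation~(\ref{equ: marginal_1_order_2}) (dropping the terms with $z_i\in\rvz_k$) is unavailable and the first-order relation keeps the full sum over $i=1,\dots,n$. Taking the second-order cross derivative in $\hat z_l$ with $l\neq j$, exactly as in Equation~(\ref{equ: marginal_2_order}), and using A2 to factorize $\log p_{\rvz|\rvu}$, each summand splits into a curvature piece $\frac{\partial^2\log p(z_i|\rvu)}{(\partial z_i)^2}\frac{\partial z_i}{\partial\hat z_j}\frac{\partial z_i}{\partial\hat z_l}$ and a gradient piece $\frac{\partial\log p(z_i|\rvu)}{\partial z_i}\frac{\partial^2 z_i}{\partial\hat z_j\partial\hat z_l}$, plus the $\rvu$-independent log-determinant term $\frac{\partial^2\log|\mJ_h|}{\partial\hat z_j\partial\hat z_l}$.

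Next I would difference this equation across $2n+1$ values $\rvu_0,\dots,\rvu_{2n}$ by subtracting the $\rvu_0$ equation from each of the others, which annihilates the $\rvu$-independent log-determinant contribution and leaves, for every pair $(j,l)$, a homogeneous linear system in the $2n$ unknowns $\frac{\partial z_i}{\partial\hat z_j}\frac{\partial z_i}{\partial\hat z_l}$ and $\frac{\partial^2 z_i}{\partial\hat z_j\partial\hat z_l}$, indexed by $i=1,\dots,n$. The coefficient rows are the second-order sufficient-change vectors of Assumption~A3 assembled over all $n$ latents; since there are now $2n$ unknowns, linear independence of these rows across the $2n$ differenced domains makes the system $2n\times 2n$ and full rank, so every unknown vanishes. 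In particular $\frac{\partial z_i}{\partial\hat z_j}\frac{\partial z_i}{\partial\hat z_l}=0$ for all $i$ and all $j\neq l$ forces at most one nonzero entry in each row of $\mJ_h$; combined with the invertibility of $h$ from Equation~(\ref{equ:start}), $h$ reduces to a permutation of coordinatewise maps, which is exactly component-wise identifiability of every $z_i$.

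The step I expect to be the crux is justifying that full connectivity forces the unknown count up to $2n$ and that $2n+1$ distinct domains are both necessary and sufficient to make the differenced coefficient matrix full rank. Here the absence of sparsity is essential: in Theorem~2 the conditional independence $\rvz_b\perp\!\!\!\perp\rvx_a\mid\rvu$ let us discard coordinates and shrink the system to size $2|\rvz_a|$, whereas under A6 no coordinate can be discarded, so the bound cannot drop below $2n+1$. Verifying that the log-determinant term is genuinely $\rvu$-independent (and hence cancels under differencing), and that no estimated-side derivative survives to contaminate the system, is the remaining technical point, after which the classical iVAE-style $2n+1$ count is recovered as the dense special case of our result.
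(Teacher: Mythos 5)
Your proposal is correct and takes essentially the same route as the paper's proof: take the second-order cross derivative of the matched-marginal identity in $\hat{z}_j,\hat{z}_l$, difference across $2n+1$ values of $\rvu$ so the $\rvu$-independent $\log|\mJ_h|$ term cancels, and use the linear-independence (sufficient-changes) assumption to get a full-rank $2n\times 2n$ homogeneous system whose only solution is $\frac{\partial z_i}{\partial \hat{z}_j}\cdot\frac{\partial z_i}{\partial \hat{z}_l}=0$ and $\frac{\partial^2 z_i}{\partial \hat{z}_j\partial \hat{z}_l}=0$, which together with invertibility of $\mJ_h$ forces one nonzero entry per row and hence component-wise identifiability.
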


\begin{proof}
We conduct second-order derivation of Equation (\ref{equ:start}) w.r.t $\hat{z}_j$ and $\hat{z}_l$ and have:
\begin{equation}
\label{equ:cor2_second_order}
    0=\frac{\partial \log p(\hat{\rvz}|\rvu)}{\partial \hat{z}_j\partial \hat{z}_l} = \sum_{i=1}^n \Big(\frac{\partial^2 \log p(z_i|\rvu)}{(\partial z_i)^2}\cdot\frac{\partial z_i}{\partial \hat{z}_j}\cdot\frac{\partial z_i}{\partial \hat{z}_l} + \frac{\partial \log p(z_i|\rvu)}{\partial z_i}\cdot\frac{(\partial z_i)^2}{\partial \hat{z}_j\partial \hat{z}_l}\Big) + \frac{\partial \log |\mJ_h|}{\partial \hat{z}_j\partial \hat{z}_l}.
\end{equation}
Suppose we have $2n+1$ different values of $\rvu$, i.e., $\rvu=\rvu_0,\rvu_1,\cdots,\rvu_{2n}$, let Equation (\ref{equ:cor2_second_order} corresponding to $\rvu_s$ subtract with that corresponding to $\rvu_0$, and have:
\begin{equation}
    0=\sum_{i=1}^n \Big((\frac{\partial^2 \log p(z_i|\rvu_s)}{(\partial z_i)^2} - \frac{\partial^2 \log p(z_i|\rvu_0)}{(\partial z_i)^2})\cdot\frac{\partial z_i}{\partial \hat{z}_j}\cdot\frac{\partial z_i}{\partial \hat{z}_l} + (\frac{\partial \log p(z_i|\rvu_s)}{\partial z_i} - \frac{\partial \log p(z_i|\rvu_0)}{\partial z_i})\cdot\frac{(\partial z_i)^2}{\partial \hat{z}_j\partial \hat{z}_l}\Big).
\end{equation}
With assumption A4,  the linear system is $2n\times 2n$ full-rank system, and the unique solution is $\frac{\partial z_i}{\partial \hat{z}_j}\cdot\frac{\partial z_i}{\partial \hat{z}_l}=0$ and $\frac{(\partial z_i)^2}{\partial \hat{z}_j\partial \hat{z}_l}=0$. Since $\mJ_h$ is invertible and full-rank, for each row of $\mJ_h$, there is only one non-zero element, implying that latent variables $\rvz$ are component-wise identifiable.
\end{proof}

\clearpage
\section{Synthetic Experiments}
\label{app:syn_exp}
\subsection{Simulation Datasets}
\begin{figure}
    \centering
    \includegraphics[width=\linewidth]{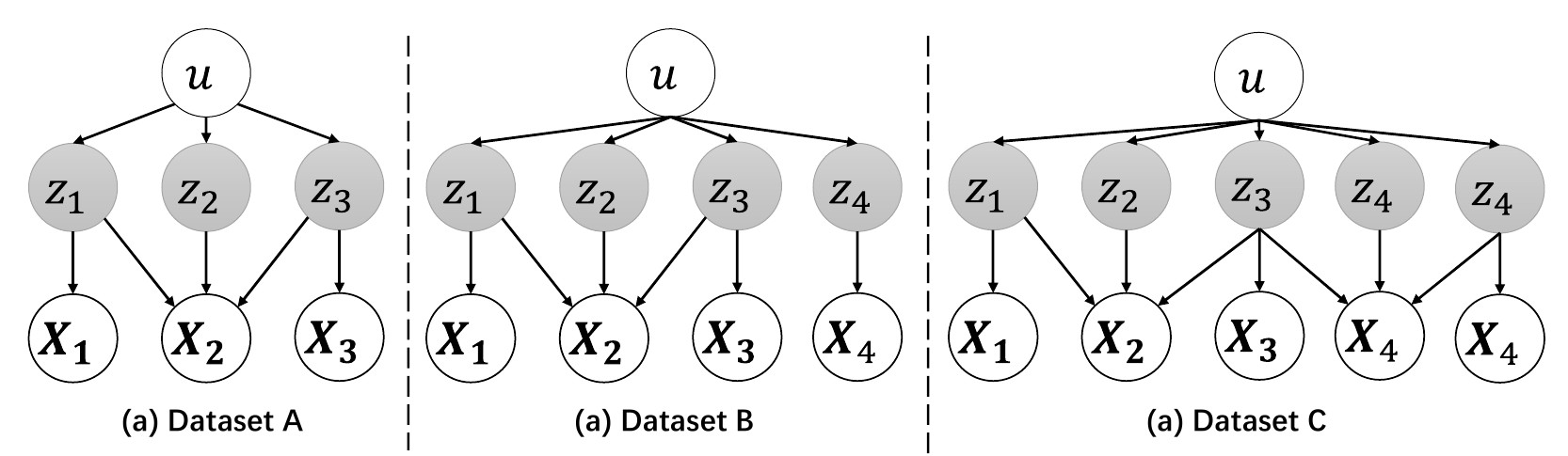}
    \caption{Data generation process of the three datasets for synthetic data.}
    \label{fig:synthetic_datasets}
\end{figure}

The synthetic datasets are constructed in accordance
with the causal graphs as shown in Figure \ref{fig:synthetic_datasets}. Specifically, the mixing functions are synthesized using multilayer perceptrons (MLPs) initialized with random weights. The mixing functions incorporate the LeakyReLU activation function to ensure invertibility. For each domain, we set different noises to ensure that $\rvz$ has different distributions.

\subsection{Mean Correlation Coefficient}
Mean Correlation Coefficient, a widely used metric in ICA research, evaluates how well latent factors are recovered. It works by calculating the absolute correlation coefficients between each ground truth factor and the estimated latent variables. If the recovered factors involve component-wise invertible nonlinearities, either Pearson’s or Spearman’s rank correlation coefficients are used accordingly. The best matching between ground truth and estimated factors is found by solving a linear sum assignment problem on the correlation matrix, which is efficiently solvable in polynomial time.

% The causal graphs of Dataset A, B, and C are shown in  

\subsection{Implementation Details}
Both the encoder and decoder in the proposed CG-VAE are composed of a 5-layer multilayer perception (MLP) with the LeakyReLU activate function. 2. We trained the VAE network using the AdamW optimizer with a learning rate of $3e-3$ and a mini-batch size of 64, the s. The coefficients for $\alpha$ and $\beta$ are $5e-2$ and $1e-3$, respectively. 

% \subsection{Implementation of CG-VAE}

% \subsection{Implementation of CG-GAN}

\section{Related Works}
% \subsection{}
\label{app:related_works}

\subsection{Disentangled Representation Learning}
Disentangled representation learning plays an important role in unsupervised learning \citep{lidisentangled,locatello2020sober,peters2017elements}. The key intuition \citep{bengio2013representation} is that the disentangled representations should separate the distinct, independent, and informative generative factors of variation in the data. And each latent variable is sensitive to changes in single underlying generative factors while being relatively invariant to changes in other factors. Based on this intuition, researchers further extend this concept to groups of factors \citep{bouchacourt2018multi, suter2018interventional, cai2019learning, higgins1812towards}. Several methods \citep{wang2022disentangled} are proposed to learn disentangled representations. For example, some researchers employ the variational autoencoders \cite{kingma2013auto, burgess2018understanding, kumar2017variational,kim2018disentangling} to learn disentangled representation. Other works employ the GAN-based structures to learn disentangled representations. For instance, InfoGAN \cite{chen2016infogan} achieves disentanglement by leveraging an extra variational regularization of mutual information. Recent works \citep{locatello2019challenging,locatello2020commentary,locatello2020sober} propose that unsupervised learning of disentangled representations is impossible, without any inductive biases on both the models and the data. Therefore, \cite{locatello2020weakly} leverages pairs of observations to learn disentangled representation in a weakly-supervised. And \cite{trauble2021disentangled} learns disentangled representation from correlations data. Recently, \cite{fumero2023leveraging} obtains disentangled representation in multi-task models by considering that the features activate sparsely with respect to tasks and are maximally shared across tasks.

\subsection{Nonlinear Independent component analysis}
Independent component analysis (ICA) provides the theoretical foundation for disentangled representation \citep{hyvarinen2023nonlinear} and identifiability is an important challenge. Previously, several works achieve identifiability by assuming the mixing processes are linear \citep{comon1994independent,hyvarinen1999nonlinear,hyvarinen2013independent}. However, the nonlinear ICA is a challenging task since the latent variables are not identifiable without any extra assumptions on the distribution of latent variables or the generation process \citep{hyvarinen2024identifiability, khemakhem2020ice,zheng2022identifiability}. To solve this problem, some works leverage the auxiliary variables e.g. domain indexes, time indexes, and class labels to achieve sufficient changes for identification \citep{hyvarinen2016unsupervised,khemakhem2020variational,pmlr-v162-kong22a,li2024subspace,halva2020hidden,halva2021disentangling,hyvarinen2017nonlinear}. Recently, other works achieve identifiability by using the sparse causal model. Specifically, \cite{lachapelle2022disentanglement,lachapelle2024nonparametric,zhang2024causal,li2024identification} employ sparse interactions between latent variables to achieve identifiability. And \cite{moran2021identifiable, zheng2023generalizing, zheng2022identifiability} achieve identifiability by using structural sparsity assumption. However, either the sufficient changes or the structural sparsity assumptions are hard to meet in real-world scenarios. Fortunately, we find the complementary benefits between the sufficient changes and sparse mixing procedure, so we can mitigate these assumptions.

% provide the identification theories by introducing 

\subsection{Multi-Domain Image Generation}
Multi-domain image generation focuses on learning the joint distribution of multi-domain image data, even without paired data \citep{liu2016coupled,pu2018jointgan,xie2023multi}. CoGAN \cite{liu2016coupled} uses different generators for each domain while sharing high-level weights to transfer domain-invariant information across domains. JointGAN \cite{pu2018jointgan} factorizes the joint distribution into marginal and conditional distributions to learn. i-StyleGAN \cite{xie2023multi} connects multi-domain generation with identifiability by dividing the latent input $z$ into $z_c$ and $z_s$, where $z_c$ captures domain-invariant information, and $z_s$ captures domain-specific information.
 Unlike conditional GANs \citep{gong2019twin, brock2018large, miyato2018cgans, odena2017conditional, kang2020contragan, kang2021rebooting, zhang2019self}, which aim to generate images that conform to provided label information and enhance diversity to approximate the marginal distribution of each domain, multi-domain image generation seeks to model the relationships between different domains within a unified framework. This approach not only captures the individual domain distributions but also the shared and domain-specific characteristics across domains. When transitioning between domains, the objective is to make minimal adjustments necessary to adapt to the new domain while preserving as much domain-invariant information as possible.

\section{\textcolor{black}{Discussion of the Theoretical Results}
}
\label{app:discussion}

\subsection{\textcolor{black}{{Implications}}}

\textcolor{black}{Based on the aforementioned theoretical results, we can relax the assumptions required for disentangled representation learning, which holds practical significance in real-world scenarios. For example, in the task of multi-domain image generation, using previous methods, it is challenging to obtain the necessary number of domains to identify all latent variables. Although existing work introduces the assumption of minimal change to alleviate this issue to some extent, due to the unknown dimensions of the latent variables, the required number of domains still does not meet theoretical needs, resulting in unsatisfactory practical outcomes. }

\textcolor{black}{Meanwhile, our method has two advantages in solving such problems. First, the conditional independence caused by the sparse mixing procedure constrains the solution space of the linear full-rank system, thus decreasing the requirement on the auxiliary variables. Second, even with an insufficient number of values of the auxiliary variables, we can still disentangle the latent variables that are of interest and relevant to downstream tasks (e.g., $\rvz_{o_1}$ in Figure \ref{fig:the2_intuition}), even if other unrelated latent variables are not identifiable. For instance, in multi-domain image generation (such as controlled image generation with or without glasses), it is sufficient to identify the latent variable related to a particular feature, such as glasses, with a small number of domains. Therefore, our method is more likely to meet practical needs.}

\subsection{\textcolor{black}{Intuition of Subspace-wise and Component-wise Identification}}
\textcolor{black}{ Subspace identification is weaker than component-wise identification, and hence naturally requires less information in the data generation process and distributions. Let us start with the subspace identifiability, suppose we have two sets of latent variables with changing distributions denoted by $\rvz_s$ and $\rvz_c$, respectively. And $\hat{\rvz}_s$ and $\hat{\rvz}_c$ are the corresponding estimated variables. Moreover, we let the $i$- and $j$- component of $\rvz_s$ and $\rvz_c$ be $z_s^i$ and $z_c^j$, respectively. In order to see whether $\rvz_s$ is identifiable up to a subspace, we just need to show whether $\frac{\partial z_{s}^i}{\partial \hat{z} _c^j}=0$. To this end, only the first-order derivative of the logarithmic density distribution is needed. However, for the propose of component-wise identifiability, we have to make additional conditions, such as conditional independence between $z_s^{i}$ and $z_s^{j}$ given the all the remaining components. These conditions impose stronger constraints compared to subspace identifiability. Mathematically, this conditional independence implies that the second-order cross derivative of the logarithmic density distribution with respect to $\hat{z} _{i}$ and $\hat{z} _{j}$ is zero, which involves second-order derivatives.}

\subsection{\textcolor{black}{Discussion of Assumptions of Theorem 2}}
\textcolor{black}{In contrast to subspace identification, here we assume that the conditional distributions are second-order differentiable, and linear independence is necessary for a unique solution to the full-rank linear system. Additionally, while the conditional independence between $\rvz_{o_2}$ and $\rvx_r$ implies a sparse mixing procedure, our assumption on the sparsity is weaker than structural sparsity  \citep{zheng2022identifiability,zheng2023generalizing}. For example, the case shown in Figure 2 achieves component-wise identifiability but violates the structural sparsity assumption. How to further analyze the relationship between structural sparsity and the sparse procedure proposed in this paper is an interesting future direction. We also note that sparse mixing procedures are common in real-world tasks. In multi-domain image generation, for instance, when transforming an image of a man into a woman, the latent variable representing gender affects only certain observed features, such as facial hair and hairstyle, while leaving the background and other facial features unchanged.}

% \section{\textcolor{black}{More Visualization Results}}
% \label{app:more_exp}

\section{\textcolor{black}{More Experiment Results}}
\label{app:more_exp}
\subsection{\textcolor{black}{Scalability of Different VAE Variants}}
\textcolor{black}{We also extend our method to other VAE variants like TC-VAE \cite{chen2018isolating}, FactorVAE \cite{kim2018disentangling}, and HFVAE \cite{esmaeili2018structured}. We named them CG-TCVAE, CG-FactorVAE, and CG-HFVAE, respectively. experiment on synthetic are shown in Table \ref{tab:diff_model_var}. According to the experiment, we can find that the all the model variants achieve the ideal disentanglement performance even the number of domains is 2, which verify our theoretical results.} 

% Please add the following required packages to your document preamble:
% \usepackage{booktabs}
% \usepackage{graphicx}
% Please add the following required packages to your document preamble:
% \usepackage{booktabs}
% \usepackage{graphicx}
\begin{table}[]
\color{black}
\centering
\caption{Experiment results of different model variants of CG-VAE.}
\label{tab:diff_model_var}
\resizebox{\textwidth}{!}{%
\begin{tabular}{@{}c|cccc|cccc@{}}
\toprule
\multicolumn{1}{l|}{}                                       & \multicolumn{4}{c|}{MCC}                    & \multicolumn{4}{c}{Completeness}   \\ \midrule
\begin{tabular}[c]{@{}c@{}}Number of \\ Domain\end{tabular} & CG-VAE & CG-FactorVAE  & CG-TCVAE & CG-HFVAE & CG-VAE & CG-FactorVAE & CG-TCVAE & CG-HFVAE \\ \midrule
2                                                           & 0.953  & 0.882    & 0.946        & 0.944    & 0.608  & 0.814    & 0.553        & 0.633    \\
4                                                           & 0.952  & 0.876    & 0.900        & 0.932    & 0.632  & 0.732    & 0.661        & 0.497    \\
6                                                           & 0.956  & 0.837    & 0.918        & 0.907    & 0.620  & 0.626    & 0.600        & 0.612    \\
8                                                           & 0.971  & 0.869    & 0.934        & 0.956    & 0.729  & 0.590    & 0.793        & 0.719    \\ \bottomrule
\end{tabular}%
}
\end{table}

\subsection{\textcolor{black}{Other Constraint on Mixing Procedure}}
\textcolor{black}{In this paper, we choose $L_1$-norm to enforce sparsity of mixing procedure since the L1 penalty induces a "sharp" constraint, encouraging the penalized quantities (like the coefficients in linear regression or  the partial derivatives in our formulation) to become exactly zero during optimization. This characteristic is particularly desirable in scenarios where we aim to simplify the model or identify the most relevant components in the mixing process. In the meanwhile, the L2 norm (squared sum) tends to stabilize training by uniformly penalizing the magnitude of all coefficients, it does not promote sparsity as effectively. Instead, it typically results in coefficients with small but non-zero values, which may not align with our objective in this context. Therefore, the L1-norm can be used to induce sparsity of the estimated mixing procedure, which aligns with the goal of our theoretical results.  We replaced L2-norm with L1-norm, and conducted experiments on the synthetic datasets, which are shown in Table \ref{tab:l2}. According to the experiment, we can find that disentanglement performance of the model with L2-norm lower than that of L1, evaluating our statement.}

% Please add the following required packages to your document preamble:
% \usepackage{booktabs}
% \usepackage{graphicx}
% \usepackage[table,xcdraw]{xcolor}
% Beamer presentation requires \usepackage{colortbl} instead of \usepackage[table,xcdraw]{xcolor}
\begin{table}[]
\centering
\color{black}
\caption{MCC results of the CG-VAE and CG-VAE-L2.}
\label{tab:l2}
\resizebox{0.5\textwidth}{!}{%
\begin{tabular}{@{}c|c c@{}}
\toprule
Number of Domains & CGVAE-L1 & CGVAE-L2 \\ \midrule
2                          & 0.953 (0.007)     & 0.887 (0.071)     \\
4                          & 0.952 (0.017)     & 0.924 (0.011)     \\
6                          & 0.956 (0.016)     & 0.892 (0.025)     \\
8                          & 0.971 (0.009)     & 0.911 (0.035)     \\ \bottomrule
\end{tabular}%
}
\end{table}

\subsection{\textcolor{black}{Experiment Results on Other Metrics}}
\textcolor{black}{Experiment results on the Dataset A of informativeness, $R^2$, and MSE are shown in Tables \ref{tab:informativeness}, \ref{tab:r2}, and \ref{tab:mse}, respectively.}

% Please add the following required packages to your document preamble:
% \usepackage{booktabs}
\begin{table}[t]
\centering
\color{black}
\caption{Informativeness results on dataset A of different methods.}
\label{tab:informativeness}\resizebox{\textwidth}{!}{%
\begin{tabular}{@{}c|ccccccc@{}}
\toprule
\begin{tabular}[c]{@{}c@{}}Number of \\ Domain\end{tabular} & \multicolumn{1}{c}{CG-VAE} & CG-GAN & iMSDA & iVAE  & FactorVAE & Slow-VAE & beta-VAE \\ \midrule
2                                                           & 0.265 (0.018)           & 0.291 (0.028) & 0.317 (0.028) & 0.402 (0.164) & 0.525 (0.013) & 0.350 (0.046) & 0.343 (0.045) \\
4                                                           & 0.256 (0.045)           & 0.351 (0.032) & 0.361 (0.045) & 0.288 (0.007) & 0.532 (0.040) & 0.399 (0.014) & 0.428 (0.036) \\
6                                                           & 0.212 (0.016)           & 0.259 (0.019) & 0.282 (0.001) & 0.251 (0.090) & 0.480 (0.017) & 0.288 (0.051) & 0.318 (0.037) \\
8                                                           & 0.206 (0.016)           & 0.265 (0.038) & 0.225 (0.013) & 0.216 (0.018) & 0.471 (0.012) & 0.322 (0.037) & 0.278 (0.051) \\ \bottomrule
\end{tabular}}
\end{table}

% Please add the following required packages to your document preamble:
% \usepackage{booktabs}
\begin{table}[t]
\color{black}
\centering
\caption{$R^2$ results on dataset A of different methods.}
\label{tab:r2}\resizebox{\textwidth}{!}{%
\begin{tabular}{@{}c|ccccccc@{}}
\toprule
\begin{tabular}[c]{@{}c@{}}Number of \\ Domain\end{tabular} & \multicolumn{1}{c}{CG-VAE} & CG-GAN & iMSDA & iVAE  & FactorVAE & Slow-VAE & $\beta$-VAE \\ \midrule
2                                                           & 0.931 (0.002)                       & 0.916 (0.014)     & 0.854 (0.019)     & 0.843 (0.090)     & 0.656 (0.012)     & 0.854 (0.015)     & 0.848 (0.003)     \\
4                                                           & 0.930 (0.023)                       & 0.876 (0.014)     & 0.883 (0.015)     & 0.904 (0.008)     & 0.728 (0.063)     & 0.834 (0.014)     & 0.799 (0.085)     \\
6                                                           & 0.950 (0.006)                       & 0.937 (0.009)     & 0.925 (0.002)     & 0.940 (0.004)     & 0.766 (0.016)     & 0.900 (0.049)     & 0.908 (0.013)     \\
8                                                           & 0.955 (0.002)                       & 0.938 (0.034)     & 0.946 (0.016)     & 0.941 (0.006)     & 0.790 (0.005)     & 0.897 (0.038)     & 0.912 (0.021)     \\ \bottomrule
\end{tabular}}
\end{table}

% Please add the following required packages to your document preamble:
% \usepackage{booktabs}
\begin{table}[t]
\color{black}
\centering
\caption{MSE results on dataset A of different methods.}
\label{tab:mse}\resizebox{\textwidth}{!}{%
\begin{tabular}{@{}c|ccccccc@{}}
\toprule
\begin{tabular}[c]{@{}c@{}}Number of \\ Domain\end{tabular} & \multicolumn{1}{c}{CG-VAE} & CG-GAN & iMSDA & iVAE  & FactorVAE & Slow-VAE & $\beta$-VAE \\ \midrule
2                                                           & 0.072 (0.009)                      & 0.084 (0.017)    & 0.105 (0.009)     & 0.122 (0.020)    & 0.325 (0.009)    & 0.141 (0.030)    & 0.158 (0.013)    \\
4                                                           & 0.074 (0.031)                      & 0.154 (0.032)    & 0.120 (0.005)     & 0.064 (0.006)    & 0.311 (0.037)    & 0.158 (0.003)    & 0.215 (0.010)    \\
6                                                           & 0.055 (0.005)                      & 0.072 (0.015)    & 0.080 (0.000)     & 0.067 (0.009)    & 0.233 (0.003)    & 0.118 (0.044)    & 0.137 (0.020)    \\
8                                                           & 0.048 (0.005)                      & 0.086 (0.001)    & 0.052 (0.007)     & 0.052 (0.002)    & 0.234 (0.010)    & 0.118 (0.040)    & 0.081 (0.028)    \\ \bottomrule
\end{tabular}}
\end{table}

\section{\textcolor{black}{Metrics Introduction}}

\label{app:metrics}

\textcolor{black}{To quantitatively analyze the disentanglement performance of different methods and settings, we employ six metrics: MCC, Completeness, Disentanglement, Informativeness, MSE, and R-squared.}

% \subsection{MCC}
\textcolor{black}{\textbf{MCC (Mean Correlation Coefficient)} evaluates the correspondence between the latent variables and generative factors. It calculates the correlation matrix between the latent representations and the generative factors and uses the Hungarian algorithm to maximize the absolute values of the diagonal elements after sorting. The final MCC score is the mean of the absolute values of the diagonal elements.}

\textcolor{black}{\textbf{Completeness \citep{eastwood2018framework}:} This metric measures the degree to which each generative factor is captured by a single latent variable. It requires a training regressor to predict generative factors from latent variables and computes the entropy of the importance weights of latent variables for each factor.}

\textcolor{black}{\textbf{Disentanglement \citep{eastwood2018framework}}: This metric assesses whether each latent variable is primarily associated with a single generative factor. It is computed by training regressors to predict latent variables from generative factors and calculating the entropy of their importance distribution.}

\textcolor{black}{\textbf{Informativeness \citep{eastwood2018framework}}: his metric quantifies the total amount of information about the generative factors contained in the latent variables. It is computed as the root mean squared error (RMSE) between the predicted generative factors and the ground truth $\rvz$.}

\textcolor{black}{\textbf{Mean Squared Error (MSE) \citep{li2024subspace}}: By using LASSO as the regressor to predict $\rvz$ from the latent representation $\hat{\rvz}$, we use MSE between he predicted generative factors and the ground truth values on the test set to evaluate the disentanglement performance.}

\textcolor{black}{\textbf{$\bm{R^2}$}: The coefficient of determination, which is also a common metric for the regression task.}

\end{document}